\documentclass[letterpaper, 10 pt, journal, twoside]{IEEEtran}


\usepackage{amsmath,amsfonts}
\usepackage{array}
\usepackage[caption=false,font=normalsize,labelfont=sf,textfont=sf]{subfig}
\usepackage{textcomp}
\usepackage{stfloats}
\usepackage{url}
\usepackage{verbatim}
\usepackage{graphicx}
\hyphenation{op-tical net-works semi-conduc-tor IEEE-Xplore}
\def\BibTeX{{\rm B\kern-.05em{\sc i\kern-.025em b}\kern-.08em
    T\kern-.1667em\lower.7ex\hbox{E}\kern-.125emX}}
\usepackage{balance}

\usepackage{caption}
\captionsetup[figure]{labelfont={small},textfont={small}}
\captionsetup[subfloat]{labelfont={small},textfont={small},
subrefformat=parens}

\usepackage{multirow}

\usepackage{cite}

\usepackage{algorithm}
\usepackage{algpseudocode}
\usepackage{algorithmicx}

\algdef{SE}[DOWHILE]{Do}{doWhile}{\algorithmicdo}[1]{\algorithmicwhile\ #1}%


\usepackage{amsthm}
\makeatletter
\def\@endtheorem{\endtrivlist}
\makeatother
\newtheorem{theorem}{Theorem}
\newtheorem{corollary}{Corollary}
\newtheorem{lemma}{Lemma}
\usepackage{amssymb}


\begin{document}

\markboth{IEEE Robotics and Automation Letters. Preprint Version. Accepted January, 2022}{Kim \MakeLowercase{\textit{et al.}}: TRC: Trust Region Conditional Value at Risk for Safe Reinforcement Learning} 

\title{TRC: Trust Region Conditional Value at Risk 

for Safe Reinforcement Learning}

\author{Dohyeong Kim and Songhwai Oh 
\thanks{Manuscript received: September, 9, 2021; Revised November, 26, 2021; Accepted January, 2, 2022.}
\thanks{This paper was recommended for publication by Editor Jens Kober upon evaluation of the Associate Editor and Reviewers’ comments.
This work was supported in part by the Institute of Information \& Communications Technology Planning \& Evaluation under Grant 2019-0-01190, [SW Star Lab] Robot Learning: Efficient, Safe, and Socially-Acceptable Machine Learning, and in part by the National Research Foundation under Grant NRF-2017R1A2B2006136, both funded by the Korea government (MSIT). \textit{(Corresponding authors: Songhwai Oh.)}}
\thanks{D. Kim and S. Oh are with the Department of Electrical and Computer Engineering and ASRI, Seoul National University, Seoul 08826, Korea (e-mail: dohyeong.kim@rllab.snu.ac.kr, songhwai@snu.ac.kr).}
\thanks{Digital Object Identifier (DOI): 10.1109/LRA.2022.3141829}
\thanks{\copyright 2022 IEEE.  Personal use of this material is permitted.  Permission from IEEE must be obtained for all other uses, in any current or future media, including reprinting/republishing this material for advertising or promotional purposes, creating new collective works, for resale or redistribution to servers or lists, or reuse of any copyrighted component of this work in other works.}
}

\maketitle
\begin{abstract}
As safety is of paramount importance in robotics, reinforcement learning that reflects safety, called \emph{safe RL}, has been studied extensively.
In safe RL, we aim to find a policy which maximizes the desired return while satisfying the defined safety constraints.
There are various types of constraints, among which constraints on conditional value at risk (CVaR) effectively lower the probability of failures caused by high costs since CVaR is a conditional expectation obtained above a certain percentile.
In this paper, we propose a trust region-based safe RL method with CVaR constraints, called \emph{TRC}.
We first derive the upper bound on CVaR and then approximate the upper bound in a differentiable form in a trust region.
Using this approximation, a subproblem to get policy gradients is formulated, and policies are trained by iteratively solving the subproblem.
TRC is evaluated through safe navigation tasks in simulations with various robots and a sim-to-real environment with a Jackal robot from Clearpath.
Compared to other safe RL methods, the performance is improved by 1.93 times while the constraints are satisfied in all experiments.
\end{abstract}

\begin{IEEEkeywords}
Reinforcement learning, robot safety, collision avoidance.
\end{IEEEkeywords}

\section{Introduction}
\IEEEPARstart{S}{afety} is one of the top priorities when designing a robot controller.
To this end, several reinforcement learning methods considering safety, called \emph{safe RL}, have been proposed in the field of robotics.
Gangapurwala et al. \cite{gangapurwala2020guided} have proposed a safe RL method for learning quadruped locomotion more stably than traditional RL methods by defining constraints on the robot states, such as foot position.
In Bharadhwaj et al. \cite{bharadhwaj2020conservative}, constraints are defined to locate a given object within a boundary for safe manipulation and robot arm controllers are trained to move the object within the region inside the boundary.
In general, safe RL solves an RL problem while satisfying explicit constraints, which can be formulated as a constrained Markov decision process (CMDP) \cite{altman1999constrained}.

CMDP is a Markov decision process (MDP), where cost functions are additionally defined to provide constraints of the problem.
In a stochastic setting, costs are random variables like rewards and constraints are often provided using the expectation of the cost sum.
For example, to prevent a robot from entering hazard regions, you can define a cost function which determines whether the robot is in the hazard regions and set a constraint that limits the expected cost sum, meaning the expected number of entries into the hazard regions, to be less than a threshold.
However, expectation-based constraints are hard to distinguish risky policies from safe policies.
Suppose that you have two policies and their cost sums follow Gaussian distributions with the same mean but with different variances.
Provided that a given environment fails when the cost sum exceeds a certain level, the policy with the high variance has a higher probability of failures than the one with the low variance. 
Then, expectation-based constraints cannot differentiate these two policies since they have the same expected cost sum.
If constraints are defined on the expectation of the tail, e.g., conditional value at risk (CVaR), rather than the whole distribution, a risky policy can be effectively distinguished.
CVaR is the conditional expectation of a random variable above a certain percentile level and is widely used in financial risk management \cite{rockafellar2000optimization}.
As CVaR differentiates the shape of the tail of the distribution, defining constraints with CVaR avoids risky policies (for more detail, see \cite{chow2014cvar}).

Safety performance is affected by not only how to define the constraint, but also how to update a policy to maximize returns while satisfying constraints.
Yang et al. \cite{yang2021wcsac} proposed a CVaR-constrained RL method called worst-case soft actor-critic (WCSAC).
WCSAC uses a Lagrangian method which relaxes a constrained problem to an unconstrained problem using Lagrange multipliers. 
Lagrangian methods are widely used in safe RL methods \cite{stooke2020responsive, ding2021provably, tessler2018reward, srinivasan2020learning, yang2021wcsac, chow2017risk}, but due to oscillations of Lagrange multipliers, the training process may become unstable \cite{stooke2020responsive}.
Alternatively, a trust-region method is an appropriate tool to solve safe RL problems as in \cite{gangapurwala2020guided, achiam2017constrained, Yang2020Projection-Based, liu2020ipo}.
Achiam et al. \cite{achiam2017constrained} proposed a novel method, called constrained policy optimization (CPO), which solves safe RL problems using linear and quadratic constrained linear programming (LQCLP) within the trust region as an extended study of trust region policy optimization (TRPO) \cite{schulman2015trust}.
CPO has shown excellent performance in safety-sensitive environments while satisfying expectation-based constraints.

We propose a trust region-based method for solving CVaR-constrained RL problems, called \emph{TRC}.
The main problem is to maximize the discounted reward sum while limiting the CVaR of the discounted cost sum not to exceed a given limit value.
To solve the problem using the trust-region method, it is necessary to estimate CVaR of any policy within the trust region in a differentiable form, which is a challenging task.
To resolve this issue, we derive the upper bound on CVaR and replace the CVaR constraint with the upper bound.
To this end, we first assume that the discounted cost sum follows a Gaussian distribution and formulate CVaR in a closed form as in \cite{yang2021wcsac, tang2020worst}. 
Then, we derive the upper bound on the square of the discounted cost sum and extend it to obtain the upper bound on CVaR, which can be approximated in a differentiable form within a trust region.
Using this approximation, a CVaR-constrained subproblem is constructed.
Finally, an optimal policy is obtained by iteratively solving the subproblem with LQCLP as in \cite{achiam2017constrained}.
In addition, to reduce the variance of policy gradient estimate, we use generalized advantage estimations (GAEs) \cite{schulman2015high} instead of advantage functions.

TRC is evaluated on safe navigation tasks with various robots in simulation and a Jackal robot from Clearpath \cite{clearpath2015jackal} in sim-to-real environments.
The experiment results show that TRC improves performance by 1.93 times compared to baseline methods, satisfying CVaR constraints for all tasks.
In conclusion, our main contributions are threefold.
First, we derive the upper bound on CVaR and the approximation of the upper bound within the trust region.
Second, we propose the policy and value update rules for the CVaR-constrained problem.
Finally, the proposed method shows excellent performance in simulation and real-world environments while satisfying constraints.

\section{RELATED WORK}

Garc{\'\i}a et al. \cite{garcia2015comprehensive} have surveyed safe RL methods and classified them into two categories: \emph{optimization criterion} and \emph{exploration process}.
The optimization criterion methods propose policy update rules to satisfy constraints, and the exploration process methods synthesize safe policies by introducing additional devices such as a recovery policy \cite{thananjeyan2021recovery} during exploration.
The optimization criterion methods have the advantage that no additional process is required during exploration, and our method proposes a new CVaR-related policy update rule, so we focus on the optimization criterion.
In this section, we divide the optimization criterion methods into two categories depending on how to update a policy: \emph{Lagrangian} and \emph{trust-region} methods. 

Lagrangian methods can treat safe RL problems as unconstrained problems by introducing Lagrange multipliers.
A simple formulation of this method allows the use of different types of constraints, such as expected cost sums \cite{stooke2020responsive, ding2021provably, tessler2018reward, srinivasan2020learning} or risk measures \cite{yang2021wcsac, chow2017risk, ying2021towards}.
For risk measure constraints, Yang et al. \cite{yang2021wcsac} estimate CVaR through value functions and find policy gradients using soft-actor critic frameworks.
Chow et al. \cite{chow2017risk} and Ying et al. \cite{ying2021towards} also proposed methods for CVaR-constrained RL.
Both methods estimate CVaR using a slack variable formulated by Rockafellar et al. \cite{rockafellar2000optimization} and handle constraints using the Lagrangian method.
However, the Lagrangian method can cause unstable training due to dual gradient descents on the multipliers and policy \cite{stooke2020responsive}, requiring additional techniques such as smoothing\footnote{Smoothing has been implemented using a soft-plus function in the Safety Starter Agents repository of OpenAI \cite{ray2019benchmarking}.} or filtering \cite{stooke2020responsive}.

Trust-region methods obtain policy gradients by linear approximation of the objective within the trust region \cite{schulman2015trust}, and there exist several methods \cite{achiam2017constrained, Yang2020Projection-Based, liu2020ipo} depending on how the constraints are handled.
Constraints are linearly approximated in \cite{achiam2017constrained} or integrated into the objective function using log-barrier functions in \cite{liu2020ipo}. 
Trust-region methods have the advantage of monotonically improving performance, but it is difficult to use with risk measure-related constraints because it requires an upper bound on the constraints in the trust region.
Bisi et al. \cite{bisi2020risk} have proposed not a safe RL but a risk-averse RL method which maximizes mean-variance, one of the risk measures, using a trust-region method.


\section{BACKGROUND}


\subsection{Constrained Markov Decision Process}

A constrained Markov decision process (CMDP) is expressed as a tuple {\small$(\mathcal{S}, \mathcal{A}, \rho, \mathcal{P}, R, C, \gamma)$}, with state space {\small$\mathcal{S} \subset \mathbb{R}^n$}, action space {\small$\mathcal{A} \subset \mathbb{R}^m$}, initial state distribution {\small$\rho$}, transition model {\small$\mathcal{P}:\mathcal{S} \times \mathcal{A} \times \mathcal{S} \mapsto \mathbb{R}$}, reward function {\small$R:\mathcal{S} \times \mathcal{A} \times \mathcal{S} \mapsto \mathbb{R}$}, cost function {\small$C:\mathcal{S} \times \mathcal{A} \times \mathcal{S} \mapsto \mathbb{R}_{\geq0}$}, and a discount factor {\small$\gamma \in [0,1)$}. 
In a CMDP, an agent can interact with the environment through a policy {\small$\pi(\cdot|s)$} that provides a distribution over actions given the state $s$.
Then, the value, action-value, and advantage function are expressed as follows:
\begin{equation}
\small
\label{eq:value functions}
\begin{aligned}
V^{\pi}(s)&:=\mathbb{E}_{\pi, \mathcal{P}}\left[\sum_{t=0}^{\infty}\gamma^t R(s_t, a_t, s_{t+1})|s_0=s\right], \\
Q^{\pi}(s, a)&:=\mathbb{E}_{\pi, \mathcal{P}}\left[\sum_{t=0}^{\infty}\gamma^t R(s_t, a_t, s_{t+1})|s_0=s, a_0=a\right], \\
A^{\pi}(s,a)&:=Q^{\pi}(s,a) - V^{\pi}(s).
\end{aligned}
\end{equation}
As in \cite{achiam2017constrained}, the cost value {\small$V_C^{\pi}$}, cost action-value {\small$Q_C^{\pi}$}, and cost advantage {\small$A_C^{\pi}$} are defined by replacing the reward in (\ref{eq:value functions}) with the cost.
Then, the objective of the agent is to maximize the expected reward sum {\small$J(\pi) := \mathbb{E}_{s \sim \rho}\left[V^{\pi}(s)\right]$} while satisfying constraints consisting of the cost {\small$C$}.
To define constraints, we represent the discounted cost sum as defined in \cite{yang2021wcsac}:
\begin{equation}
\small
\label{eq:discounted cost sum}
\begin{aligned}
C_{\pi} &:= \sum_{t=0}^{\infty}\gamma^t C(s_t, a_t, s_{t+1}), \\
\end{aligned}
\end{equation}
where {\small$s_0 \sim \rho$}, {\small$a_t \sim \pi(\cdot|s_t)$}, and {\small$s_{t+1} \sim \mathcal{P}(\cdot|s_t, a_t)$} for {\small$\forall t$}.
Since {\small$C_{\pi}$} is a random variable in a stochastic setting, safety constraints can be constructed using appropriate probabilities or expectations of the discounted cost sum {\small$C_{\pi}$}.

\subsection{Conditional Value at Risk}

CVaR is one of the representative risk measures used to analyze the tails of distributions in financial portfolios \cite{rockafellar2000optimization}.
Given the cumulative density function (CDF) on a variable $X$, CVaR is obtained by calculating the expectation only for the region where CDF value is above a specific risk level $\alpha$.
\begin{equation}
\small
\label{eq:CVaR}
\begin{aligned}
\mathrm{CVaR}_{\alpha}(X) = \mathbb{E}[X| X \geq \mathrm{ICDF}(1 - \alpha)],
\end{aligned}
\end{equation}
where {\small$\mathrm{ICDF}$} is the inverse cumulative density function.
If the variable {\small$X$} follows a Gaussian distribution {\small$\mathcal{N}(\mu, \sigma)$}, CVaR can be expressed in a simple closed-form as follows:
\begin{equation}
\small
\label{eq:CVaR on Gaussian}
\begin{aligned}
\mathrm{CVaR}_{\alpha}(X) = \mu + \frac{\phi(\Phi^{-1}(\alpha))}{\alpha}\sigma,
\end{aligned}
\end{equation}
where {\small$\phi(x) = \frac{1}{\sqrt{2\pi}}e^{-\frac{x^2}{2}}$} and {\small$\Phi(x) = \frac{1}{2}\left(1 + \mathrm{erf}(\frac{x}{\sqrt{2}})\right)$} \cite{khokhlov2016conditional}.
For general distribution, CVaR can be estimated from sampling, which is computationally expensive.
Therefore, to provide a practical method, we assume that {\small$C_{\pi}$} follows a Gaussian distribution to utilize the closed-form (\ref{eq:CVaR on Gaussian}) as commonly used in \cite{tang2020worst, yang2021wcsac}.
To get the mean and variance of the distribution over {\small$C_{\pi}$}, the cost square function {\small$S_{C}^{\pi}$} is defined as follows:
\begin{equation}
\label{eq:cost square and variance function}
\small
\begin{aligned}
S_{C}^{\pi}(s) &:= \mathbb{E}_{\pi, \mathcal{P}}\left[C_{\pi}^2|s_0 = s\right], \\ 
S_{C}^{\pi}(s, a) &:= \mathbb{E}_{\pi, \mathcal{P}}\left[C_{\pi}^2|s_0 = s, a_0 = a\right]. \\ 
\end{aligned}
\end{equation}
Additionally, the cost square advantage function is defined as {\small${A_S^{\pi}(s,a):=S_C^{\pi}(s,a) - S_C^{\pi}(s)}$}.
The expectation of the discounted cost sum {\small$C_{\pi}$} and square of the discounted cost sum {\small$C_{\pi}^2$} are denoted as {\small$J_C(\pi):=\mathbb{E}_{s \sim \rho}\left[V_C^{\pi}(s)\right]$} and {\small$J_{S}(\pi):=\mathbb{E}_{s \sim \rho}\left[S_C^{\pi}(s)\right]$}, respectively.
Then, the discounted cost sum can be expressed as {\small$C_{\pi} \sim \mathcal{N}(J_{C}(\pi), J_{S}(\pi) - J_{C}(\pi)^2)$} \cite{tang2020worst}.
Finally, the CVaR of {\small$C_{\pi}$} can be approximated as follows \cite{tang2020worst}:
\begin{equation}
\label{eq:CVaR of discounted cost sum}
\small
\begin{aligned}
\mathrm{CVaR}_{\alpha}(C_{\pi}) \approx J_{C}(\pi) + \frac{\phi(\Phi^{-1}(\alpha))}{\alpha}\sqrt{J_{S}(\pi) - J_{C}(\pi)^2}.
\end{aligned}
\end{equation}

\subsection{Constrained Policy Optimization}

Constrained Policy Optimization (CPO) \cite{achiam2017constrained} is a trust region-based method to solve an expectation-constrained RL problem and the problem is written as follows:
\begin{equation}
\small
\label{eq:CPO}
\begin{aligned}
&\; \underset{\pi}{\mathrm{maximize}} \underset{\rho, \pi, \mathcal{P}}{\mathbb{E}}\left[\sum_{t=0}^{\infty}\gamma^t R(s_t,a_t, s_{t+1})\right] \\
&\mathbf{s.t.} \quad \underset{\rho, \pi, \mathcal{P}}{\mathbb{E}}\left[\sum_{t=0}^{\infty}\gamma^t C(s_t, a_t, s_{t+1})\right] \leq \frac{d}{1-\gamma}, \\
\end{aligned}
\end{equation}
where $d$ is a limit value for the safety constraint.
Achiam et al. \cite{achiam2017constrained} derives the following subproblem to update policy {\small$\pi'$} within the trust region of policy {\small$\pi$}.
{\small
\begin{align}
\label{eq:CPO approximation}
&\qquad \underset{\pi'}{\mathrm{maximize}}\underset{\begin{subarray}{c}s\sim d^{\pi}\\a\sim\pi\end{subarray}}{\mathbb{E}}\left[\frac{\pi'(a|s)}{\pi(a|s)}A^{\pi}(s,a)\right] \\
\mathbf{s.t.} \quad &\underset{s \sim \rho}{\mathbb{E}}\left[V_C^{\pi}(s)\right] + \frac{1}{1-\gamma}\underset{\begin{subarray}{c}s\sim d^{\pi}\\a\sim\pi\end{subarray}}{\mathbb{E}}\left[\frac{\pi'(a|s)}{\pi(a|s)}A_{C}^{\pi}(s,a)\right] \leq \frac{d}{1-\gamma}, \nonumber\\
& \qquad \;\; \underset{s\sim d^{\pi}}{\mathbb{E}}\left[D_{\mathrm{KL}}(\pi||\pi')[s]\right] \leq \delta, \nonumber
\end{align}
} 
\!\!where {\small$D_\mathrm{KL}$} is the Kullback-Leibler divergence, {\small$d^{\pi}(s):=(1-\gamma)\sum_{t=0}^{\infty}\gamma^{t}\mathrm{Prob}(s_t=s|\pi)$} is the discounted state distribution.
Then, a suboptimal policy is obtained by iteratively solving the subproblem (\ref{eq:CPO approximation}) with linear approximations on the objective and the safety constraint and quadratic approximations on the KL divergence term.

\section{PROPOSED METHOD}

The proposed method utilizes the trust-region method and addresses a safe RL problem with CVaR constraints, which are more conservative than the expectation of discounted cost sums in that CVaR focuses on the tail of the distribution.
The CVaR-constrained problem is formulated as:
\begin{equation}
\label{eq:main problem}
\small
\begin{aligned}
\underset{\pi}{\mathrm{maximize}}& \underset{\rho, \pi, \mathcal{P}}{\mathbb{E}}\left[\sum_{t=0}^{\infty}\gamma^t R(s_t, a_t, s_{t+1})\right] \\
\mathbf{s.t.} \quad &\mathrm{CVaR}_{\alpha}(C_{\pi}) \leq d/(1 - \gamma). \\
\end{aligned}
\end{equation}
In this section, the upper bound on the CVaR within the trust region is derived first. 
Next, with the upper bound, a trust region-based subproblem for policy update is proposed and the proposed policy update rule is described.
Then, GAE \cite{schulman2015high} for the cost square value, newly defined in this paper, is introduced to use instead of advantage, and finally, the value update rules are described.

\subsection{Upper Bound on CVaR}

This section presents the upper bound on the CVaR of a new policy {\small$\pi'$} from a given policy {\small$\pi$}.
To this end, we first define useful functions and establish a theorem for the square of the discounted cost sum.
Then, the upper bound on the CVaR is obtained by combining this derived theorem with result on the discounted cost sum derived in \cite{achiam2017constrained}.

Similar to the discounted state distribution {\small$d^{\pi}$}, the doubly discounted state distribution is defined as {\small$d_2^{\pi}(s):=(1-\gamma^2)\sum_{t=0}^{\infty}\gamma^{2t}\mathrm{Prob}(s_t=s|\pi)$}.
Then, the expectation of the discounted cost sum and square of the discounted cost sum can be rewritten as follows\footnote{For brevity, {\small${\underset{\begin{subarray}{c}s \sim d^{\pi}\\a \sim \pi'\\s' \sim \mathcal{P}\end{subarray}}{\mathbb{E}}}$} is denoted by {\small$\underset{d^{\pi},\pi',\mathcal{P}}{\mathbb{E}}$} from now on.}:
{\small
\begin{align}
\label{eq:expectation of the discounted cost sum}
J_{C}(\pi) &= \underset{\tau \sim \pi}{\mathbb{E}}\left[\sum_{t=0}^{\infty}\gamma^t C_t\right] = \frac{1}{1-\gamma}\underset{d^{\pi}, \pi, \mathcal{P}}{\mathbb{E}}\left[C(s, a, s')\right], \nonumber\\
J_{S}(\pi) &= \underset{\tau \sim \pi}{\mathbb{E}}\left[\left(\sum_{t=0}^{\infty}\gamma^t C_t\right)^2\right] \\
&= \underset{\tau \sim \pi}{\mathbb{E}}\left[\sum_{t=0}^{\infty}\gamma^{2t}C_t^2 + 2\gamma\sum_{t=0}^{\infty}\left\{\gamma^{2t}C_t\sum_{k=t+1}^{\infty}\gamma^{k-t-1} C_k\right\}\right] \nonumber\\
&= \frac{1}{1-\gamma^2}\underset{d_2^{\pi}, \pi, \mathcal{P}}{\mathbb{E}}\left[C(s, a, s')^2 + 2\gamma C(s, a, s')V_C^{\pi}(s')\right], \nonumber
\end{align}
}
\!\!where {\small$C_t=C(s_t,a_t,s_{t+1})$}.
Using (\ref{eq:expectation of the discounted cost sum}), the upper bound on the difference in {\small$J_S$} between {\small$\pi$} and {\small$\pi'$} is derived as follows.
\begin{theorem}
\label{theorem:cost square sum}
For any policy {\small$\pi$} and {\small$\pi'$}, define a variable:
\begin{equation*}
\resizebox{\columnwidth}{!}{%
$\epsilon_S^{\pi'}:=\frac{\gamma^2}{1-\gamma^2}\underset{s}{\mathrm{max}}\underset{a \sim \pi'}{\mathbb{E}}\left[A_S^{\pi}(s,a)\right] + \frac{2\gamma\underset{s}{\mathrm{max}}\left|V_C^{\pi}(s)\right|}{1-\gamma}\underset{d_2^{\pi}, \pi', \mathcal{P}}{\mathbb{E}}\left[C(s,a,s')\right].$%
}
\end{equation*}
Then, the following inequality holds:
\begin{equation}
\small
\label{eq:cost square upper bound}
\begin{aligned}
J_{S}(\pi') - J_{S}(\pi) \leq& \frac{1}{1-\gamma^2}\underset{\begin{subarray}{c}s \sim d_2^{\pi}\\a \sim \pi\end{subarray}}{\mathbb{E}}\left[\frac{\pi'(a|s)}{\pi(a|s)}A_{S}^{\pi}(s,a) \right] + \\
&\frac{2\epsilon_S^{\pi'}}{1-\gamma^2}\underset{s}{\mathrm{max}}D_{\mathrm{TV}}(\pi'||\pi)[s], \\
\end{aligned}
\end{equation}
where equality holds if {\small$\pi=\pi'$} and {\small$D_{\mathrm{TV}}$} is the total variation divergence \cite{schulman2015trust}.
\end{theorem}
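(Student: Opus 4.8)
The plan is to recognize that the squared-cost objective $J_S$ is itself a discounted return of an auxiliary Markov chain, so that the inequality can be obtained by importing the performance-difference machinery of \cite{achiam2017constrained} and then correcting for one extra effect that is absent in the expectation-based setting. Concretely, the recursion behind (\ref{eq:expectation of the discounted cost sum}) shows that $S_C^{\pi}$ is the value function of a $\gamma^2$-discounted MDP whose one-step cost is $\bar{C}^{\pi}(s,a,s') := C(s,a,s')^2 + 2\gamma C(s,a,s')V_C^{\pi}(s')$; hence $A_S^{\pi}$ is exactly the advantage of this auxiliary MDP and $J_S(\pi) = \tfrac{1}{1-\gamma^2}\,\mathbb{E}_{d_2^{\pi},\pi,\mathcal{P}}[\bar{C}^{\pi}]$. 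The crucial difference from \cite{achiam2017constrained} is that the auxiliary cost $\bar{C}^{\pi}$ depends on the reference policy through $V_C^{\pi}$, which is what ultimately produces the second summand of $\epsilon_S^{\pi'}$.

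First I would establish the exact decomposition. Applying the telescoping identity $\sum_{t}\gamma^{2t}\big(\gamma^2 S_C^{\pi}(s_{t+1}) - S_C^{\pi}(s_t)\big) = -S_C^{\pi}(s_0)$ along trajectories of $\pi'$ and using $\mathbb{E}_{s_0\sim\rho}[S_C^{\pi}(s_0)] = J_S(\pi)$ gives
\[
J_S(\pi') - J_S(\pi) = \tfrac{1}{1-\gamma^2}\underset{d_2^{\pi'},\pi',\mathcal{P}}{\mathbb{E}}[A_S^{\pi}(s,a)] + \tfrac{2\gamma}{1-\gamma^2}\underset{d_2^{\pi'},\pi',\mathcal{P}}{\mathbb{E}}\!\big[C(s,a,s')\big(V_C^{\pi'}(s') - V_C^{\pi}(s')\big)\big],
\]
where the first term is the ``clean'' surrogate and the second is a correction arising because $\bar{C}^{\pi'}$ replaces $\bar{C}^{\pi}$ when the running policy is $\pi'$.

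For the clean term I would mirror \cite{achiam2017constrained}: the state-distribution-shift bound for the $\gamma^2$-chain, $\tfrac{1}{2}\lVert d_2^{\pi'} - d_2^{\pi}\rVert_1 \le \tfrac{\gamma^2}{1-\gamma^2}\,\mathbb{E}_{d_2^{\pi}}[D_{\mathrm{TV}}(\pi'\|\pi)[s]]$, lets me swap $d_2^{\pi'}$ for $d_2^{\pi}$ at the cost of a $D_{\mathrm{TV}}$ term, and rewriting $\mathbb{E}_{a\sim\pi'}$ as the importance ratio $\tfrac{\pi'(a|s)}{\pi(a|s)}$ recovers $\tfrac{1}{1-\gamma^2}\mathbb{E}_{d_2^{\pi},\pi}[\tfrac{\pi'}{\pi}A_S^{\pi}]$. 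Bounding $\mathbb{E}_{d_2^{\pi}}[D_{\mathrm{TV}}]$ by $\max_s D_{\mathrm{TV}}$ and collecting the coefficient produces the first term of the right-hand side together with the $\tfrac{\gamma^2}{1-\gamma^2}\max_s\mathbb{E}_{a\sim\pi'}[A_S^{\pi}]$ piece of $\epsilon_S^{\pi'}$ (multiplied by $\tfrac{2}{1-\gamma^2}\max_s D_{\mathrm{TV}}$).

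The main obstacle is the correction term, which has no analogue in \cite{achiam2017constrained}. Here I would factor out the sup-norm gap of the cost value function and control it by a contraction/performance-difference argument on the original cost MDP, giving $\lVert V_C^{\pi'} - V_C^{\pi}\rVert_\infty \le \tfrac{2\max_s|V_C^{\pi}(s)|}{1-\gamma}\max_s D_{\mathrm{TV}}(\pi'\|\pi)[s]$, after which the remaining factor $\mathbb{E}_{d_2^{\pi'},\pi'}[C]$ is reduced to $\mathbb{E}_{d_2^{\pi},\pi'}[C]$; this yields precisely the $V_C^{\pi}$-dependent summand of $\epsilon_S^{\pi'}$. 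The delicate points are the coupling that converts the value-function gap into a $D_{\mathrm{TV}}$ factor with coefficient governed by $\max_s|V_C^{\pi}|$, and the residual state-distribution shift between $d_2^{\pi'}$ and $d_2^{\pi}$. Summing the two contributions gives (\ref{eq:cost square upper bound}), and setting $\pi' = \pi$ makes both the correction term and every $D_{\mathrm{TV}}$ factor vanish while $\mathbb{E}_{a\sim\pi}[A_S^{\pi}] = 0$, so equality holds.
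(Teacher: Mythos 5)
Your proposal is correct and is essentially the paper's own proof: your telescoping decomposition is exactly the paper's Lemma \ref{lemma:doubly discounted state dist}/Corollary \ref{corollary:square cost function} applied with $f=S_C^{\pi}$, your value-gap bound is its Lemma \ref{lemma:inf norm of cost value}, and your $d_2$-shift bound is its Lemma \ref{lemma:total variance}, assembled in the same way. The one difference is ordering: the paper swaps $d_2^{\pi'}$ for $d_2^{\pi}$ \emph{before} replacing $V_C^{\pi'}$ with $V_C^{\pi}$, so the residual cost factor lands under $d_2^{\pi}$ exactly as in the definition of $\epsilon_S^{\pi'}$ --- this is precisely the ``residual state-distribution shift'' you flagged as delicate, and it is resolved (at the paper's level of rigor, which itself absorbs the analogous second-order term inside its bound on $\lVert\delta_f^{\pi'}\rVert_{\infty}$) by adopting that ordering.
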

The proof is given in Appendix B. 
Assuming that the {\small$D_{\mathrm{TV}}$} term is small enough in (\ref{eq:cost square upper bound}), Theorem \ref{theorem:cost square sum} gives a differentiable approximation of the upper bound on $J_S$ by removing the {\small$D_{\mathrm{TV}}$} term.
This assumption is valid if {\small$\pi'$} is within the trust region of {\small$\pi$}.
Before inducing the upper bound on the CVaR, for brevity the following functions are defined:
\begin{equation}
\small
\label{eq:surrogate function definition}
\begin{aligned}
J_C^{\pi}(\pi')&:=J_{C}(\pi) + \frac{1}{1-\gamma}\underset{\begin{subarray}{c}s \sim d^{\pi}\\a \sim \pi\end{subarray}}{\mathbb{E}}\left[\frac{\pi'(a|s)}{\pi(a|s)}A_{C}^{\pi}(s,a) \right], \\
J_S^{\pi}(\pi')&:=J_{S}(\pi) + \frac{1}{1 - \gamma^2}\underset{\begin{subarray}{c}s \sim d_2^{\pi}\\a \sim \pi\end{subarray}}{\mathbb{E}}\left[\frac{\pi'(a|s)}{\pi(a|s)}A_{S}^{\pi}(s, a)\right], \\ D^{\pi}(\pi')&:=\underset{s}{\mathrm{max}}D_{\mathrm{TV}}(\pi'||\pi)[s],
\end{aligned}
\end{equation}
where {\small$J_C^{\pi}(\pi') = J_C(\pi)$} and {\small$J_S^{\pi}(\pi') = J_S(\pi)$} hold when {\small$\pi = \pi'$} as the expectations on the advantages are zero.
Since CVaR is composed of {\small$J_C$} and {\small$J_S$} as in (\ref{eq:CVaR of discounted cost sum}), the upper bound on CVaR can be obtained using the bounds on {\small$J_C$} derived in \cite{achiam2017constrained} and Theorem \ref{theorem:cost square sum} as follows.
\begin{theorem}
\label{theorem:upper bound}
For any policies {\small$\pi$} and {\small$\pi'$}, let define {\small$\epsilon_{C}^{\pi'}:=\underset{s}{\mathrm{max}}\underset{a \sim \pi'}{\mathbb{E}}\left[A_C^{\pi}(s,a)\right]$ and \\
$\epsilon_{\mathrm{CVaR}}^{\pi'}:=\epsilon_S^{\pi'} + \left(J_C^{\pi}(\pi') - \frac{\gamma\epsilon_C^{\pi'}}{(1-\gamma)^2}D^{\pi}(\pi')\right)\frac{2\gamma(1+\gamma)}{1-\gamma}\epsilon_C^{\pi'}$}. \\
Then, the following inequality holds:
{\small
\begin{align}
\label{eq:CVaR upper bound}
&\mathrm{CVaR}_{\alpha}(C_{\pi'}) \leq J_C^{\pi}(\pi') + \frac{\phi(\Phi^{-1}(\alpha))}{\alpha}\sqrt{J_S^{\pi}(\pi') - J_C^{\pi}(\pi')^2} \;+ \nonumber\\
&\; \frac{2}{1-\gamma}\left(\frac{\gamma\epsilon_C^{\pi'}}{1-\gamma} + \frac{\phi(\Phi^{-1}(\alpha))/\alpha}{{\sqrt{J_S(\pi') - J_C(\pi')^2}}}\frac{\epsilon_{\mathrm{CVaR}}^{\pi'}}{1+\gamma}\right) D^{\pi}(\pi'),
\end{align}
}
\!\!where equality holds if {\small$\pi=\pi'$}.
\end{theorem}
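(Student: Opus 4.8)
The starting point is the closed form (\ref{eq:CVaR of discounted cost sum}), which lets me write $\mathrm{CVaR}_{\alpha}(C_{\pi'}) = J_C(\pi') + \kappa\,\sigma(\pi')$, where $\kappa := \phi(\Phi^{-1}(\alpha))/\alpha$ and $\sigma(\pi') := \sqrt{J_S(\pi') - J_C(\pi')^2}$ is the standard deviation of $C_{\pi'}$. My plan is to bound the mean term $J_C(\pi')$ and the deviation term $\kappa\sigma(\pi')$ separately, replacing the true quantities $J_C(\pi'),J_S(\pi')$ by the surrogates $J_C^{\pi}(\pi'),J_S^{\pi}(\pi')$ of (\ref{eq:surrogate function definition}) and collecting every resulting error into a single multiple of $D^{\pi}(\pi')$. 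For the mean term I would invoke the CPO bounds of \cite{achiam2017constrained}, which give $J_C(\pi') \le J_C^{\pi}(\pi') + B$ and $J_C(\pi') \ge J_C^{\pi}(\pi') - B$ with $B := \tfrac{2\gamma\epsilon_C^{\pi'}}{(1-\gamma)^2}D^{\pi}(\pi')$. The upper inequality already produces the first summand $\tfrac{2}{1-\gamma}\tfrac{\gamma\epsilon_C^{\pi'}}{1-\gamma}D^{\pi}(\pi') = B$ of the claimed correction.

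The real work is the deviation term. Writing $\hat\sigma(\pi')^2 := J_S^{\pi}(\pi') - J_C^{\pi}(\pi')^2$, I would first bound the variance gap
\begin{equation*}
\sigma(\pi')^2 - \hat\sigma(\pi')^2 = \big(J_S(\pi') - J_S^{\pi}(\pi')\big) + \big(J_C^{\pi}(\pi')^2 - J_C(\pi')^2\big).
\end{equation*}
The first bracket is controlled by Theorem \ref{theorem:cost square sum}, which by the definition of $J_S^{\pi}(\pi')$ rearranges to $J_S(\pi') - J_S^{\pi}(\pi') \le \tfrac{2\epsilon_S^{\pi'}}{1-\gamma^2}D^{\pi}(\pi')$. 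For the second bracket I would combine the CPO lower bound $J_C(\pi') \ge J_C^{\pi}(\pi') - B$ with the nonnegativity of cost, $J_C(\pi') \ge 0$. Provided $J_C^{\pi}(\pi') \ge B$ (true whenever the trust region is small enough that $B$ is modest), both $J_C(\pi')$ and $J_C^{\pi}(\pi') - B$ are nonnegative, so monotonicity of squaring gives $J_C(\pi')^2 \ge (J_C^{\pi}(\pi') - B)^2$ and hence $J_C^{\pi}(\pi')^2 - J_C(\pi')^2 \le 2B J_C^{\pi}(\pi') - B^2 = 2B\big(J_C^{\pi}(\pi') - \tfrac{B}{2}\big)$. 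Substituting $B$ together with $B/2 = \tfrac{\gamma\epsilon_C^{\pi'}}{(1-\gamma)^2}D^{\pi}(\pi')$ and factoring $\tfrac{2}{1-\gamma^2}D^{\pi}(\pi')$ out of the sum of the two brackets is a routine simplification that collapses everything into exactly $\tfrac{2}{1-\gamma^2}D^{\pi}(\pi')\,\epsilon_{\mathrm{CVaR}}^{\pi'}$; this is precisely where the factor $J_C^{\pi}(\pi') - \tfrac{\gamma\epsilon_C^{\pi'}}{(1-\gamma)^2}D^{\pi}(\pi')$ and the coefficient $\tfrac{2\gamma(1+\gamma)}{1-\gamma}\epsilon_C^{\pi'}$ inside $\epsilon_{\mathrm{CVaR}}^{\pi'}$ come from.

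It then remains to pass from $\sigma(\pi')^2 \le \hat\sigma(\pi')^2 + M$, with $M := \tfrac{2\epsilon_{\mathrm{CVaR}}^{\pi'}}{1-\gamma^2}D^{\pi}(\pi') \ge 0$, to a bound on $\sigma(\pi')$ itself. I would argue by cases: if $\sigma(\pi') \ge \hat\sigma(\pi')$ then $\hat\sigma(\pi')\sigma(\pi') \ge \hat\sigma(\pi')^2$, so $\sigma(\pi')^2 \le \hat\sigma(\pi')\sigma(\pi') + M$ and dividing by $\sigma(\pi') > 0$ yields $\sigma(\pi') \le \hat\sigma(\pi') + M/\sigma(\pi')$; if instead $\sigma(\pi') < \hat\sigma(\pi')$ the same inequality is immediate since $M \ge 0$. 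Multiplying by $\kappa$ and adding the mean bound $J_C(\pi') \le J_C^{\pi}(\pi') + B$ gives $\mathrm{CVaR}_{\alpha}(C_{\pi'}) \le J_C^{\pi}(\pi') + \kappa\hat\sigma(\pi') + B + \kappa M/\sigma(\pi')$, and checking that $B + \kappa M/\sigma(\pi')$ equals the bracketed correction of (\ref{eq:CVaR upper bound}) is immediate once one notes $\tfrac{2}{1-\gamma^2} = \tfrac{2}{(1-\gamma)(1+\gamma)}$ and $\sigma(\pi') = \sqrt{J_S(\pi') - J_C(\pi')^2}$. Equality at $\pi=\pi'$ follows because $D^{\pi}(\pi') = 0$ and the surrogates reduce to $J_C(\pi),J_S(\pi)$, recovering (\ref{eq:CVaR of discounted cost sum}).

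The main obstacle I anticipate is the second bracket $J_C^{\pi}(\pi')^2 - J_C(\pi')^2$: naively bounding $J_C^{\pi}(\pi') + J_C(\pi')$ by $2J_C^{\pi}(\pi') + B$ produces a $+B^2$ term and a correction that is too loose, so one must instead exploit $J_C(\pi') \ge 0$ to retain the tighter $-B^2$ term, which is exactly what the subtracted quantity inside $\epsilon_{\mathrm{CVaR}}^{\pi'}$ encodes. The case split that converts $\sigma^2$-control into $\sigma$-control, and the mild assumption $J_C^{\pi}(\pi') \ge B$, are the only other places requiring care.
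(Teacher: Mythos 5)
Your proposal is correct and takes essentially the same route as the paper's own proof: both bound the mean term with the CPO inequalities, bound $J_S(\pi')-J_S^{\pi}(\pi')$ via Theorem~\ref{theorem:cost square sum}, exploit cost nonnegativity to obtain the tighter $2BJ_C^{\pi}(\pi')-B^2$ bound on $J_C^{\pi}(\pi')^2-J_C(\pi')^2$, and then convert variance control into standard-deviation control before taking the weighted sum. Your case split and explicit assumption $J_C^{\pi}(\pi')\geq B$ simply make rigorous the steps the paper performs via the identity $\sqrt{a}-\sqrt{b}=(a-b)/(\sqrt{a}+\sqrt{b})$ with the $\sqrt{b}$ term dropped from the denominator.
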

The proof is given in Appendix C.
Assuming that {\small$D^{\pi}(\pi')$} is small enough, Theorem \ref{theorem:upper bound} yields a differentiable approximation of the upper bound on CVaR by removing {\small$D^{\pi}(\pi')$}.

\subsection{Policy Optimization in Trust Region}

This section shows a trust region-based subproblem for the policy update.
We parameterize the policy with {\small$\theta$} and denote {\small$\pi_{\theta_{\mathrm{old}}}$} as {\small$\pi_{\mathrm{old}}$} and {\small$\pi_{\theta}$} as {\small$\pi$} for brevity.
In a trust region, the inequality in (\ref{eq:main problem}) can be replaced with the upper bound on CVaR in (\ref{eq:CVaR upper bound}) as follows:
{\small
\begin{align}
\label{eq:CVaR constraints}
& \frac{d}{1-\gamma} \geq \; J_C^{\pi_{\mathrm{old}}}(\pi) + \frac{\phi(\Phi^{-1}(\alpha))}{\alpha}\sqrt{J_S^{\pi_{\mathrm{old}}}(\pi) - J_C^{\pi_{\mathrm{old}}}(\pi)^2} \;+ \nonumber\\
&\;\; \frac{2}{1-\gamma}\left(\frac{\gamma\epsilon_C^{\pi}}{1-\gamma} + \frac{\phi(\Phi^{-1}(\alpha))/\alpha}{{\sqrt{J_S(\pi) - J_C(\pi)^2}}}\frac{\epsilon_{\mathrm{CVaR}}^{\pi}}{1+\gamma}\right) D^{\pi_{\mathrm{old}}}(\pi).
\end{align}
}\!\!
However, as mentioned in \cite{achiam2017constrained} and \cite{schulman2015trust}, divergence terms in the constraint cause a small step size of the policy update.
Thus, we approximate the upper bound on CVaR by removing the divergence term in (\ref{eq:CVaR constraints}) and add a trust-region constraint as in \cite{achiam2017constrained} and \cite{schulman2015trust}.
Then, the proposed CVaR-constrained subproblem can be written as below by replacing the constraint in (\ref{eq:CPO approximation}) with the approximated CVaR.
{\small
\begin{align}
\label{eq: proposed constrained subproblem}
&\quad\quad\quad\quad \underset{\pi}{\mathrm{maximize}}\;\underset{\begin{subarray}{c}s\sim d^{\pi_{\mathrm{old}}}\\a\sim\pi_{\mathrm{old}}\end{subarray}}{\mathbb{E}}\left[\frac{\pi(a|s)}{\pi_{\mathrm{old}}(a|s)}\mathrm{}A^{\pi_{\mathrm{old}}}(s, a)\right] \\
&\mathrm{s.t.} \quad J_C^{\pi_{\mathrm{old}}}(\pi) + \frac{\phi(\Phi^{-1}(\alpha))}{\alpha}\sqrt{J_S^{\pi_{\mathrm{old}}}(\pi) - J_C^{\pi_{\mathrm{old}}}(\pi)^2} \leq \frac{d}{1 - \gamma}, \nonumber\\
& \quad\quad\quad\quad\quad\quad \underset{s\sim d_{\pi_{\mathrm{old}}}}{\mathbb{E}}\left[D_{\mathrm{KL}}(\pi_{\mathrm{old}}||\pi)[s]\right] \leq \delta. \nonumber
\end{align}
}\!\!
Directly solving (\ref{eq: proposed constrained subproblem}) is difficult as it is non-convex, so we linearly approximate the objective and the CVaR constraint and quadratically approximate the KL divergence as in \cite{achiam2017constrained} to find a practical solution. Then, LQCLP is used to find an optimal policy of the approximated subproblem.
In addition, if the feasible set of the approximated subproblem is empty, the policy is updated to minimize only the approximation of the upper bound on CVaR within the trust region to obtain a safe policy as in CPO \cite{achiam2017constrained}.
Finally, the original constrained problem (\ref{eq:main problem}) can be solved by iteratively solving the subproblem (\ref{eq: proposed constrained subproblem}).

\subsection{Value and Square Function Update}

The tradeoff between bias and variance of policy gradient estimates can be effectively controlled using generalized advantage estimations (GAEs) instead of advantages.
Thus, we also formulate GAE for the newly defined square function, then present the value and square function update rules.

To formulate GAE for the square function, a TD error is first defined as: {\small$\delta_t^{S}:=C_{t+k}^2 + 2\gamma C_{t+k}V_{C,t+k+1}^{\pi} + \gamma^2 S_{C,t+k+1}^{\pi} - S_{C,t+k}^{\pi}$}, where {\small$S_{C,t}^{\pi}=S_C^{\pi}(s_t)$}.
Then, the GAE is derived as follows:
\begin{equation}
\small
\label{eq: GAE for cost square}
\hat{A}_{S,t}^{\mathrm{GAE}(\gamma, \lambda)} := \sum_{i=t}^{\infty}(\gamma^2\lambda)^{i-t}\delta_i^S,
\end{equation}
where {\small$\lambda \in [0, 1]$} is an exponential weight (see Appendix D for its derivation).
The value function is parameterized by a neural network {\small$\phi$} and the cost value function and square function are parameterized by {\small$\phi_C$} and {\small$\psi_C$}, respectively.
For the targets of the value and square functions, we use {\small$\mathrm{TD}(\lambda)$} which can be obtained by adding the current value to {\small$\hat{A}_{t}^{\mathrm{GAE}(\gamma, \lambda)}$}.
Then, the loss functions can be written as in \cite{yang2021wcsac}:
{\small
\begin{align}
\label{eq:target value}
&\qquad\qquad\;\; \underset{\phi}{\mathrm{min}}\underset{\rho, \pi, \mathcal{P}}{\mathbb{E}}\left[\left(V^{\pi, \mathrm{target}}(s_t) - V^{\pi}(s_t;\phi)\right)^2 \right], \nonumber\\
&\qquad\qquad \underset{\phi_C}{\mathrm{min}}\underset{\rho, \pi, \mathcal{P}}{\mathbb{E}}\left[\left(V_C^{\pi, \mathrm{target}}(s_t) - V_C^{\pi}(s_t;\phi_C)\right)^2 \right], \\
&\resizebox{\columnwidth}{!}{%
$\underset{\psi_C}{\mathrm{min}}\underset{\rho, \pi, \mathcal{P}}{\mathbb{E}}\left[S_{C}^{\pi}(s_t;\psi_C) + S_C^{\pi, \mathrm{target}}(s_t) - 2\sqrt{S_{C}^{\pi}(s_t;\psi_C)S_C^{\pi, \mathrm{target}}(s_t)}\right]$%
}, \nonumber
\end{align}}\!\!\!
where {\small$V^{\pi, \mathrm{target}}, V_C^{\pi, \mathrm{target}}$}, and {\small$S_C^{\pi, \mathrm{target}}$} are targets for the value, cost value, and cost square function, respectively.

We have proposed the upper bound on CVaR, the trust-region method for policy update, and the update rules for the value and square networks.
The overall algorithm is summarized in Algorithm \ref{algo: proposed algorithm}.

\begin{algorithm}[!t]
\small
    \caption{TRC}
    \label{algo: proposed algorithm}
    \begin{algorithmic}[1]
    \Require Initial policy network $\pi(a|s;\theta)$, value network $V^{\pi}(s;\phi)$, cost value network $V_C^{\pi}(s;\phi_C)$, cost square network $S_C^{\pi}(s;\psi_C)$.
    \For{epochs=1, P}
        \State Initialize trajectory memory $D$.
        \For{episodes=1, E}
            \State Get an initial state $s_0$ from the environment.
            \For{t=0, T}
                \State Sample an action $a_t \sim \pi(\cdot|s_t;\theta)$.
                \State Feed the action $a_t$ to the environment.
                \State Get reward $r_t$, cost $c_t$, and next state. $s_{t+1}$, and store $(s_t, a_t, r_t, c_t, s_{t+1})$ in $D$.
            \EndFor
        \EndFor
        \State Calculate $\hat{A}^{\mathrm{GAE}(\gamma, \lambda)}$, $\hat{A}_{C}^{\mathrm{GAE}(\gamma, \lambda)}$, and $\hat{A}_{S}^{\mathrm{GAE}(\gamma, \lambda)}$ with $V^{\pi}(s;\phi)$, $V_C^{\pi}(s;\phi_C)$, $S_C^{\pi}(s;\psi_C)$, and $D$.
        \State Calculate a policy gradient from (\ref{eq: proposed constrained subproblem}) using the calculated GAEs as the advantages and update $\pi(a|s;\theta)$.
        \State Update $V^{\pi}(s;\phi)$, $V_C^{\pi}(s;\phi_C)$, and $S_C^{\pi}(s;\psi_C)$ using (\ref{eq:target value}) from $D$.
      \EndFor
    \end{algorithmic}
\end{algorithm}

\section{EXPERIMENTS}

\subsection{Simulation Setup}

The safety gym \cite{ray2019benchmarking} provides various robots and tasks, so it is an environment suitable for measuring performance of safe RL methods.
In our experiments, \emph{goal task}, which is to navigate to a given goal without passing through hazard areas, is performed with three robots, \emph{point}, \emph{car}, and \emph{doggo}, as shown in Figure \ref{fig:safety gym task}.
Eight hazard areas and one goal are randomly spawned at the beginning of each episode, and the goal is respawned if a robot reaches the goal.
The doggo goal task is difficult to train safe RL methods directly, so we first train a low-level controller to reach a given goal in obstacle-free environments.
Then, safe RL agents are trained in the reconstructed environment whose action is to give a two-dimensional subgoal position to the low-level controller.

The state space is a 24-dimensional space which consists of a two-dimensional goal direction, distance to the goal, two-dimensional acceleration, two-dimensional velocity, rotation velocity, and 16-dimensional LiDAR sensors and the action space is a two-dimensional space.
The reward and the cost function are the same for all tasks as follows:
\begin{equation}
\small
\label{eq:reward function}
\begin{aligned}
&R(s, a, s') = d_g(s) - d_g(s') + 
        \begin{cases}
          1, & \text{if}\ d_g(s) \leq \delta \\
          0, & \text{otherwise}
        \end{cases}, \\
&C(s, a, s') = \mathrm{Sigmoid}(w_c\cdot(r_h - d_h(s))), \\
\end{aligned}
\end{equation}
where $d_g$ gives the distance between the given state and the goal, $d_h$ gives the minimum distance between the given state and hazards, $\delta$ is a goal threshold, $w_c$ is a cost weight, and $r_h$ is the size of the hazard.

The policy network has two hidden layers of sizes $(512, 512)$ with \emph{relu} activation and an output layer with \emph{sigmoid} activation. The value and square networks also have the same hidden layers as the policy and the output activation is linear for the value network and \emph{softplus} for the square network.
For constraints, the risk level $\alpha$ is $0.125$ and the limit value $d$ is $0.025$.
For value and square networks, the learning rate is $0.0002$ and $\lambda$ for GAE is $0.97$. 

\begin{figure}[!t]
\centering
\subfloat[Point Goal]%
    {\label{sfig:static point goal}{\includegraphics[width=0.3\linewidth]{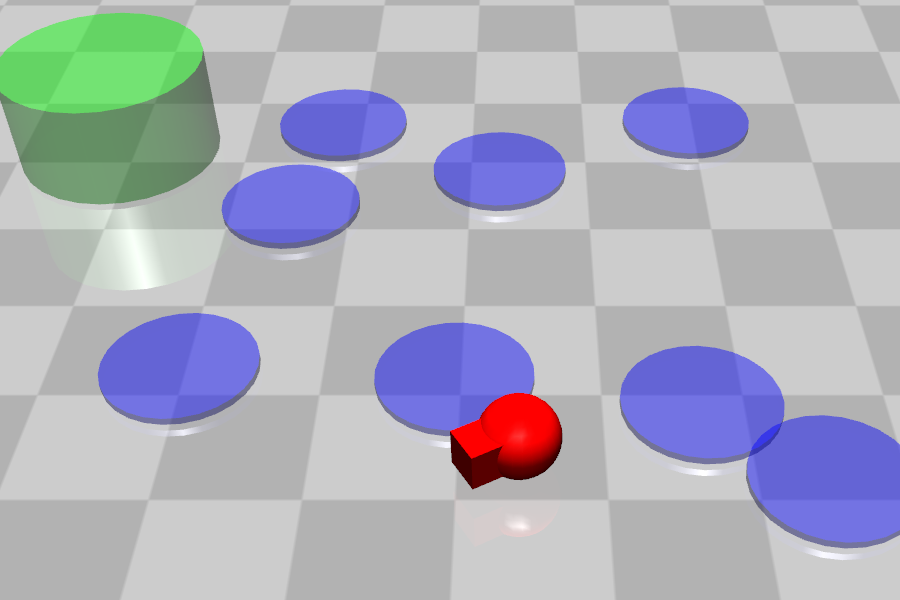}}}\hfill
\subfloat[Car Goal]%
    {\label{sfig:static car goal}{\includegraphics[width=0.3\linewidth]{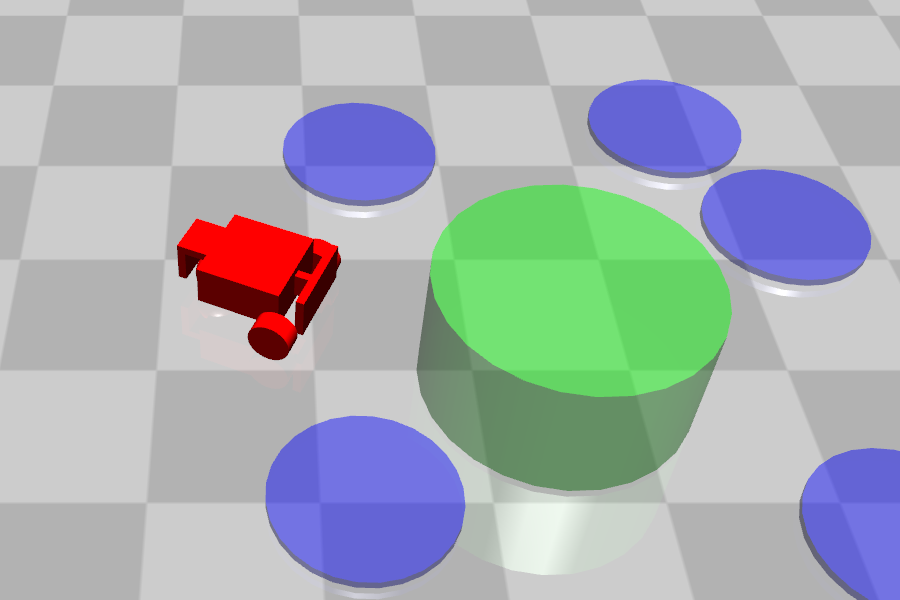}}}\hfill
\subfloat[Doggo Goal]%
    {\label{sfig:doggo goal}{\includegraphics[width=0.3\linewidth]{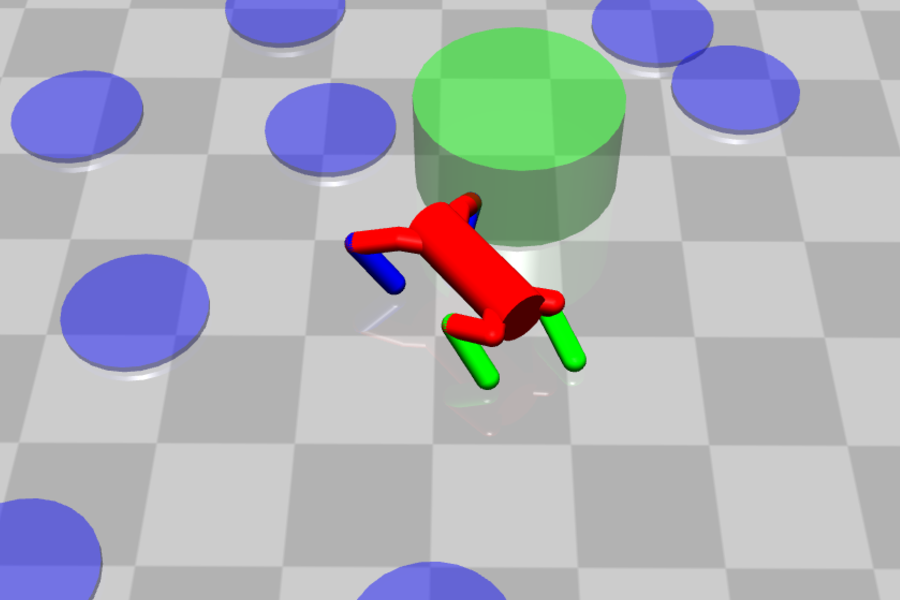}}}\hfill

\subfloat[Jackal Goal]%
    {\label{sfig:jackal goal}{\includegraphics[width=0.3\linewidth]{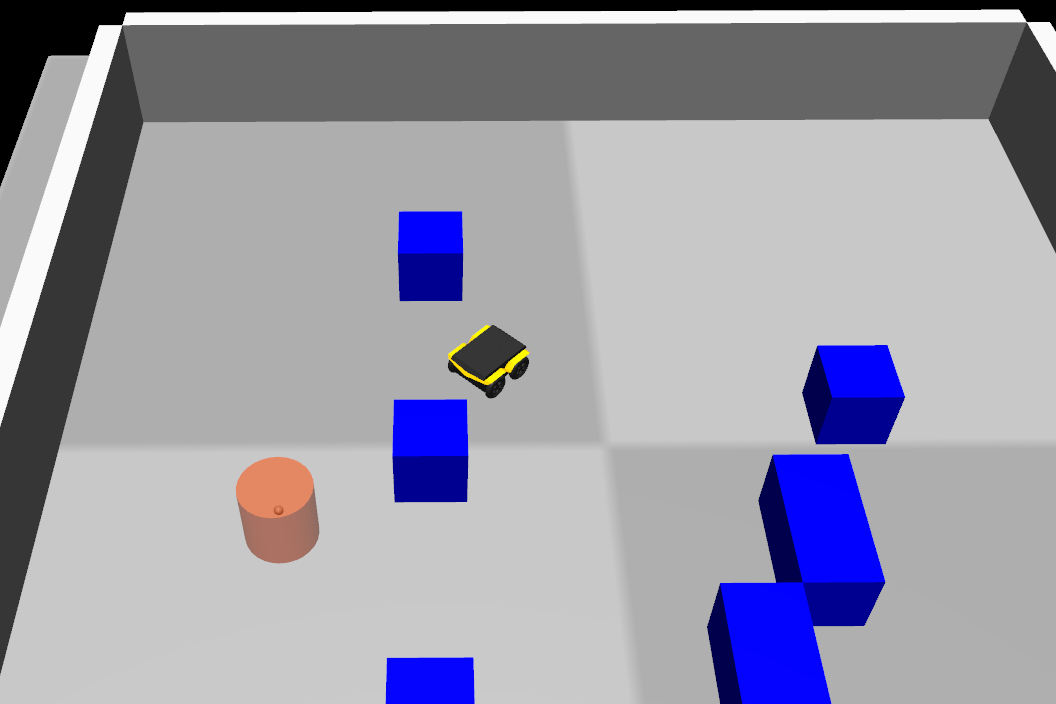}}}\hfill
\subfloat[Jackal robot]%
    {\label{sfig:jackal robot}{\includegraphics[width=0.3\linewidth]{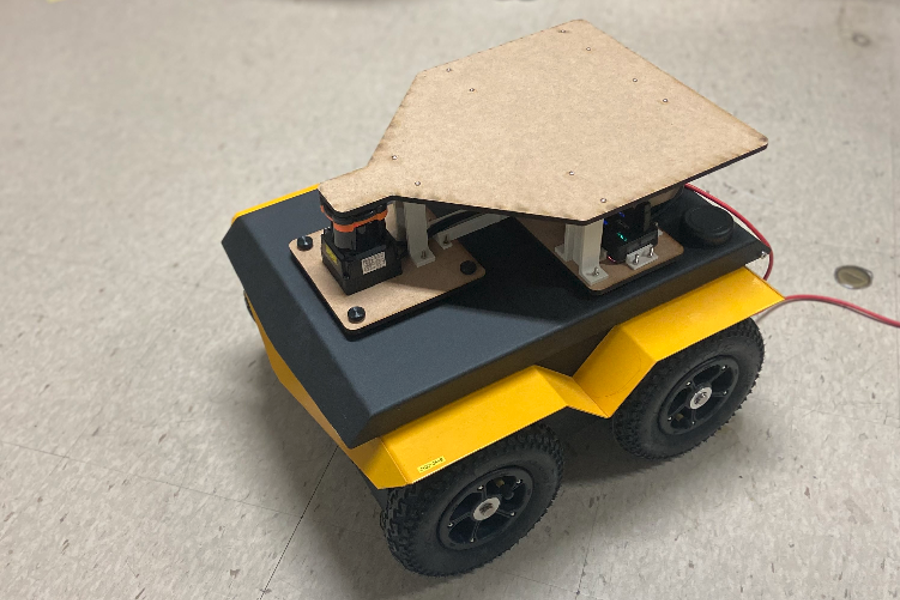}}}\hfill
\subfloat[Real world]%
    {\label{sfig:real environment}{\includegraphics[width=0.3\linewidth]{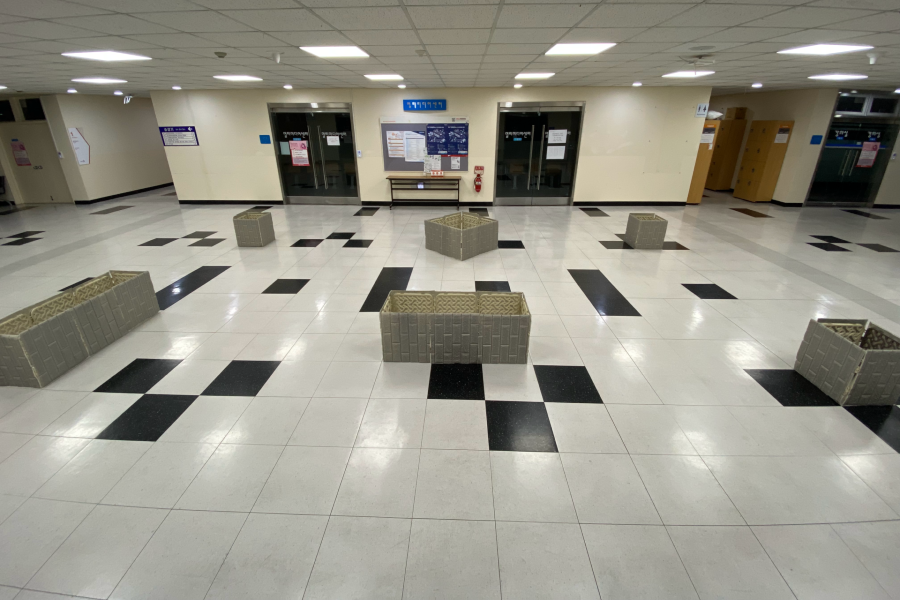}}}\hfill

\caption{Goal tasks of the safety gym and the sim-to-real Jackal environments. In the safety gym (a), (b), and (c), robots, hazards, and goals are indicated in red, purple, and green, respectively. In the Jackal simulation (d), obstacles are indicated in blue, and a goal is indicated in red. The real Jackal robot and the real environment with obstacles are shown in (e), (f).}
\label{fig:safety gym task}
\vspace{-10pt}
\end{figure}

\subsection{Sim to Real Experiment Setup}

\begin{figure*}[t]
\centering
\subfloat[Point goal]%
    {\label{sfig:static point result}{\includegraphics[width=0.2\linewidth]{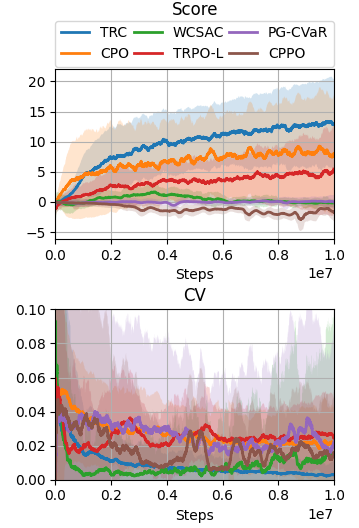}}}\hfill
\subfloat[Car goal]%
    {\label{sfig:static car result}{\includegraphics[width=0.2\linewidth]{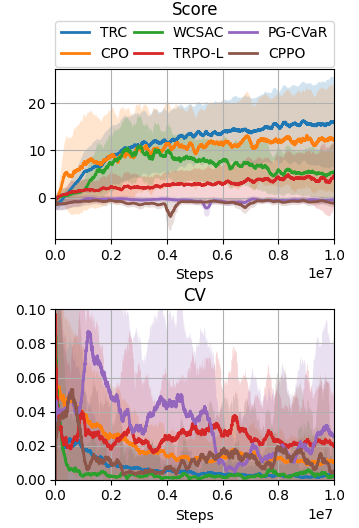}}}\hfill
\subfloat[Doggo goal]%
    {\label{sfig:doggo result}{\includegraphics[width=0.2\linewidth]{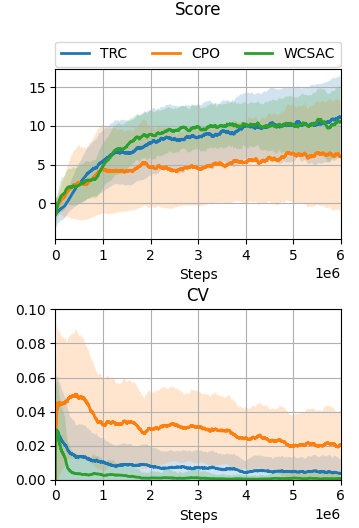}}}\hfill
\subfloat[Jackal goal]%
    {\label{sfig:Jackal result}{\includegraphics[width=0.2\linewidth]{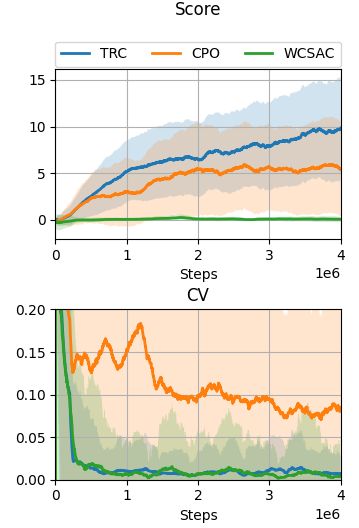}}}\hfill
\subfloat[GAE]%
    {\label{sfig:ablation result}{\includegraphics[width=0.2\linewidth]{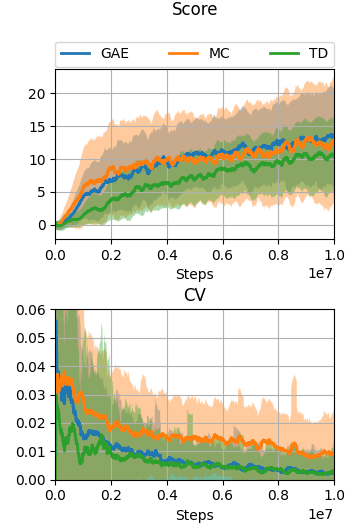}}}\hfill
\vspace{-5pt}
\caption{Training curves of the simulation and sim to real experiments. 
The graph on the top row shows the score for each task and the graph on the bottom shows the average constraint violations (CV) divided by the episode length during training.
Each method is trained with five different random seeds.
The bold line shows the mean value, and the area in lighter color shows the standard deviations.}
\label{fig:total results}
\vspace{-10pt}
\end{figure*}

The sim-to-real experiment is to navigate to a given goal with a Jackal robot \cite{clearpath2015jackal} while avoiding obstacles and walls.
For safety, the Jackal robot is constrained not to have a negative linear velocity since the LiDAR range of the Jackal robot is only 270 degrees.
Additionally, when an agent contacts an obstacle, the episode ends and the agent gets an additional cost equal to the remaining number of episode steps, which makes this task more difficult than the safety gym tasks.
An agent is trained on the Mujoco simulator \cite{todorov2012mujoco} (an example is shown in Figure \ref{sfig:jackal goal}) and controlled using ROS packages in the real environment (shown in Figure \ref{sfig:jackal robot}, \ref{sfig:real environment}). 
Evaluation of policy performance in the real environment is conducted without further training.
To estimate a relative goal direction and distance, we use a SLAM method, Cartographer \cite{cartographer}, with a map without obstacles.

The state space is a 31-dimensional space which consists of a two-dimensional goal direction, goal distance, linear and angular velocity, and 26-dimensional LiDAR sensors. 
The Jackal robot moves with two-dimensional commands for linear acceleration and angular velocity and the reward and cost functions are the same as in the simulation experiment.
Also, the network structures and hyperparameters are the same as the simulation setup.

\subsection{Baselines}

For expectation-constrained safe RL methods, CPO \cite{achiam2017constrained} and trust region policy optimization with a Lagrangian method (TRPO-L) \cite{ray2019benchmarking} are used.
For CVaR-constrained safe RL methods, worst-case soft actor critic (WCSAC) \cite{tang2020worst}, policy gradient with CVaR (PG-CVaR) \cite{chow2017risk}, and CVaR proximal policy optimization (CPPO) \cite{ying2021towards} are used.
Comparing methods with different types of safety constraints is tricky.
Therefore, we experiment with the expectation-constrained methods for three different limit values of $0.005$, $0.01$ and $0.025$ and report the best value to compare results. WCSAC, PG-CVaR, and CPPO use the same limit values as TRC.

\subsection{Results}

\begin{figure}[t]
\centering
\includegraphics[width=.75\linewidth]{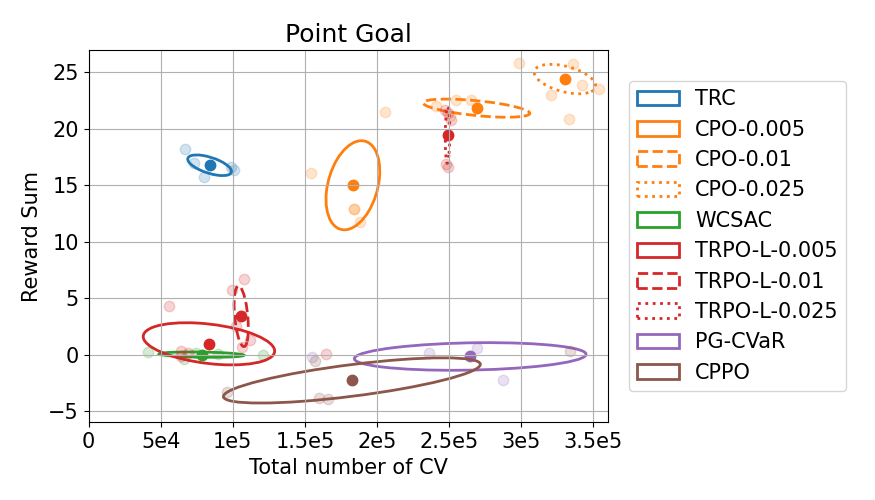}
\vspace{-5pt}
\caption{Total number of CVs and reward sum on the point goal task.
Ellipses are drawn with the center, semi-major, and semi-minor axes obtained from the mean and covariance of the five runs for each method.
Results for CPO and TRPO-L at different thresholds are included, with the threshold value indicated in the legend.
}
\label{fig:pareto results}
\vspace{-10pt}
\end{figure}

The training curves of the simulation experiments are shown in Figure \ref{fig:total results}, and the result video can be found in the attached material.
Each method is trained on five different random seeds for each task.
To show safety performance, the number of entering into the hazard regions, denoted as a constraint violation (CV), is counted per episode, and CVaR is estimated by (\ref{eq:CVaR on Gaussian}) using the mean and variance of the CV.
Figure \ref{fig:pareto results} shows a graph of the total number of CVs during training and the final reward sum.
As the CV decreases, the reward sum tends to decrease, so instead of the reward sum, a score metric is defined as {\small$\sum_{t=0}^{T-1}R_t/\left(1 + \sum_{t=0}^{T-1}\mathrm{CV}_t\right)$}.
The average final score, CV, and CVaR of each task are presented in Table \ref{table:final policy}, and evaluation results of the real Jackal environment are shown in Table \ref{table:real environment}.
Additionally, to analyze the effect of the GAE, we have trained TRC with GAE ($\lambda = 0.97$), Monte-Carlo (MC, $\lambda = 1.0$), and temporal difference (TD, $\lambda = 0.0$) on the point goal task, the results of which are shown in Figure \ref{sfig:ablation result}.

TRC shows the best performance in all tasks as shown in Table \ref{table:final policy}, \ref{table:real environment}.
The performances are improved by 1.55 times, 1.21 times, 1.06 times, and 3.91 times compared to the second best performing algorithm in the point, car, doggo, and Jackal tasks, respectively, which is 1.93 times the performance of the second best method on average.
The constraints are also satisfied in all tasks seeing that CVaR values in Table \ref{table:final policy} and \ref{table:real environment} are below the limit value of $0.025$.
In Figure \ref{fig:pareto results}, TRC is located in the most upper left corner, which means that it shows high reward sums while having low CVs, and has a small covariance, which can be interpreted as TRC can train policies robustly against random seeds.
WCSAC shows a score as high as TRC while satisfying the constraint in the doggo task but has low scores in the other tasks.
It seems that WCSAC can have a high score in the doggo task because the task difficulty is lowered by using a low-level controller.
However, if there are significant constraint violations in the early training, Lagrange multipliers rapidly increase.
Thus, policies are trained to lower the constraint excessively in the other tasks, especially in the Jackal task.
CPO shows high average CVs in all tasks, which can decrease score. 
For this reason, CPO shows the lowest score in the doggo task but shows the second highest performance in the other tasks, which can be attributed to trust-region methods that do not require Lagrange multipliers.
However, the number of failures in Table \ref{table:real environment} is four in the real-world environment, hence, it seems difficult to apply CPO to real robots.
In the case of TRPO-L, it can be inferred that Lagrange multipliers are updated unstably because CV curves in Figure \ref{fig:total results} fluctuate significantly compared to the other methods, and this fluctuation appears to have a negative effect on scores.
For PG-CVaR and CPPO, the CV curves also fluctuate largely, and the scores do not increase, as shown in Figure \ref{fig:total results}.
Both methods estimate CVaR from sampling and use the Lagrangian method to integrate the CVaR constraints into the policy objectives.
It seems that the variance of the CVaR estimation due to sampling cause unstable training.
Finally, in Figure \ref{sfig:ablation result}, MC shows the score curve similar to GAE, but it shows high average CVs due to the large variance of target values.
TD shows the lowest score, which can be attributed to the bias in the policy gradients.
Therefore, we can conclude that the GAE helps to train the policy and value functions by appropriately adjusting bias and variance.

\begin{table}[t]
\centering
\resizebox{\columnwidth}{!}{%
\begin{tabular}{|l|lll|lll|}
\hline
\multirow{2}{*}{} & \multicolumn{3}{l|}{Score $\uparrow$}                                                              & \multicolumn{3}{l|}{CV (CVaR) $\downarrow$}                                                                                     \\ \cline{2-7} 
                  & \multicolumn{1}{l|}{Point}         & \multicolumn{1}{l|}{Car}           & Doggo         & \multicolumn{1}{l|}{Point}                  & \multicolumn{1}{l|}{Car}                    & Doggo                  \\ \hline
TRC (proposed)    & \multicolumn{1}{l|}{\textbf{12.9}} & \multicolumn{1}{l|}{\textbf{15.6}} & \textbf{11.1} & \multicolumn{1}{l|}{\textbf{0.003 (0.017)}} & \multicolumn{1}{l|}{\textbf{0.002 (0.016)}} & 0.004 (0.019)          \\ \hline
CPO               & \multicolumn{1}{l|}{8.3}           & \multicolumn{1}{l|}{12.9}          & 6.0           & \multicolumn{1}{l|}{0.020 (0.058)}          & \multicolumn{1}{l|}{0.011 (0.041)}          & 0.021 (0.056)          \\ \hline
WCSAC             & \multicolumn{1}{l|}{-0.2}          & \multicolumn{1}{l|}{5.6}           & 10.5          & \multicolumn{1}{l|}{0.016 (0.167)}          & \multicolumn{1}{l|}{0.003 (0.067)}          & \textbf{0.001 (0.009)} \\ \hline
TRPO-L            & \multicolumn{1}{l|}{6.1}           & \multicolumn{1}{l|}{4.2}           & -             & \multicolumn{1}{l|}{0.022 (0.060)}          & \multicolumn{1}{l|}{0.023 (0.102)}          & -                      \\ \hline
PG-CVaR           & \multicolumn{1}{l|}{0.0}           & \multicolumn{1}{l|}{-0.4}          & -             & \multicolumn{1}{l|}{0.026 (0.158)}          & \multicolumn{1}{l|}{0.024 (0.135)}          & -                      \\ \hline
CPPO              & \multicolumn{1}{l|}{-1.7}          & \multicolumn{1}{l|}{-1.2}          & -             & \multicolumn{1}{l|}{0.019 (0.124)}          & \multicolumn{1}{l|}{0.006 (0.052)}          & -                      \\ \hline
\end{tabular}
}
\caption{Final policy performance of the simulation experiments.
The average score is presented in the left columns, and the mean and CVaR of the CV divided by episode length are presented in the right columns.}
\label{table:final policy}
\vspace{-5pt}
\end{table}

\begin{table}[t]
\centering
\resizebox{0.75\columnwidth}{!}{%
\begin{tabular}{|l|l|l|l|l|}
\hline
      & Reward sum $\uparrow$ & CV (CVaR) $\downarrow$ & \multicolumn{2}{l|}{\# of failures $\downarrow$} \\ \hline
TRC (proposed)   & \textbf{12.5}       & 0.004 (0.024)     & \multicolumn{2}{l|}{0 / 10}         \\ \hline
CPO   & 3.2        & 0.276 (0.704)  & \multicolumn{2}{l|}{4 / 10}        \\ \hline
WCSAC & -0.6       & \textbf{0.000 (0.000)}      & \multicolumn{2}{l|}{0 / 10}         \\ \hline
\end{tabular}
}
\caption{Evaluation results in the real-world Jackal environment. The results are obtained by performing 10 episodes for each method and the number of episodes that ended up hitting an obstacle is indicated as the number of failures.}
\label{table:real environment}
\vspace{-10pt}
\end{table}

\section{CONCLUSIONS}

In order to train a safe policy even in the worst case, we have proposed a trust region-based method, called \emph{TRC}, which maximizes the return while satisfying the safety constraints on CVaR.
For the development of TRC, we have derived the upper bound on CVaR and proposed a policy update rule based on trust-region methods.
Additionally, GAEs for CVaR-related value networks are formulated to train the networks with low variance.
With the proposed policy and value network update rule, TRC outperforms existing safe RL methods in simulation and sim-to-real experiments while successfully satisfying all constraints during training. 
We plan to apply TRC to safe manipulation and locomotion tasks in our future work.

\appendix
\small
This section provides proofs of Theorem \ref{theorem:cost square sum}, Theorem \ref{theorem:upper bound} and derivation of GAE for a square function.
The necessary lemmas and corollaries are first presented.

\subsection{Preliminary}

\begin{lemma}
\label{lemma:doubly discounted state dist}
Assume that the state space is finite and the following equation holds for any function {\footnotesize$f$}: 
\begin{equation}
\scriptsize
\begin{aligned}
(1 - \gamma^2)\underset{s \sim \rho}{\mathbb{E}}\left[f(s)\right] + \gamma^2\underset{d_2^{\pi}, \pi, \mathcal{P}}{\mathbb{E}}\left[f(s')\right] - \underset{s \sim d_2^{\pi}}{\mathbb{E}}\left[f(s)\right] = 0.
\end{aligned}
\end{equation}
\end{lemma}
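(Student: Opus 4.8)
The plan is to expand all three expectations in terms of the per-step state occupancy probabilities {\small$P_t(s):=\mathrm{Prob}(s_t=s\mid\pi)$} and show that the resulting geometric-weighted series cancel. By the definition of the doubly discounted state distribution, {\small$d_2^{\pi}(s)=(1-\gamma^2)\sum_{t=0}^{\infty}\gamma^{2t}P_t(s)$}, and since {\small$s_0\sim\rho$} we have {\small$P_0=\rho$}. Because the state space is finite and {\small$\gamma\in[0,1)$} gives {\small$\sum_{t}\gamma^{2t}<\infty$}, every sum over states is finite and the interchange of the summation over {\small$t$} with the summation over {\small$s$} used below is justified by absolute convergence.

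First I would rewrite the two straightforward terms directly as {\small$(1-\gamma^2)\mathbb{E}_{s\sim\rho}[f(s)]=(1-\gamma^2)\sum_{s}P_0(s)f(s)$} and {\small$\mathbb{E}_{s\sim d_2^{\pi}}[f(s)]=(1-\gamma^2)\sum_{t=0}^{\infty}\gamma^{2t}\sum_{s}P_t(s)f(s)$}. The crucial step is the middle term. I would introduce the one-step transition operator induced by {\small$\pi$}, namely {\small$(T_{\pi}\mu)(s'):=\sum_{s}\mu(s)\sum_{a}\pi(a\mid s)\mathcal{P}(s'\mid s,a)$}, together with its defining property that if {\small$s_t$} is distributed as {\small$P_t$} then {\small$s_{t+1}$} is distributed as {\small$T_{\pi}P_t=P_{t+1}$}. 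This identifies the middle expectation as a pushforward, {\small$\mathbb{E}_{d_2^{\pi},\pi,\mathcal{P}}[f(s')]=\sum_{s'}(T_{\pi}d_2^{\pi})(s')f(s')$}, and applying {\small$T_{\pi}$} to the series for {\small$d_2^{\pi}$} and using linearity gives {\small$T_{\pi}d_2^{\pi}=(1-\gamma^2)\sum_{t=0}^{\infty}\gamma^{2t}P_{t+1}$}.

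Then I would reindex with {\small$u=t+1$} to obtain {\small$\gamma^2 T_{\pi}d_2^{\pi}=(1-\gamma^2)\sum_{u=1}^{\infty}\gamma^{2u}P_u$}. Substituting the three expressions into the left-hand side and factoring out {\small$(1-\gamma^2)$}, the signed measure paired with {\small$f$} becomes {\small$P_0+\sum_{u=1}^{\infty}\gamma^{2u}P_u-\sum_{t=0}^{\infty}\gamma^{2t}P_t$}; since the {\small$t=0$} term of the last sum is exactly {\small$P_0$}, the bracket vanishes identically, so the whole expression is zero for every {\small$f$}. The computation is essentially routine, so the only point needing care is the pushforward interpretation of the middle expectation together with the index shift that makes the two infinite series telescope; the finiteness assumption on the state space and {\small$\gamma<1$} are precisely what license the interchange of summations underlying that argument.
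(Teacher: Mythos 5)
Your proof is correct and is essentially the paper's own argument: the paper writes $d_2^{\pi} = (1-\gamma^2)(I-\gamma^2 P_{\pi})^{-1}\rho$, multiplies through by $(I-\gamma^2 P_{\pi})$, and pairs with $f$, which is exactly your series telescoping with the Neumann expansion of the inverse left implicit. Your identification of the middle expectation as the pushforward $T_{\pi}d_2^{\pi}$ is the same key step the paper performs via the policy-induced transition matrix $P_{\pi}$.
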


\begin{proof}
Let {\footnotesize$ P_{\pi}\in\mathbb{R}^{|\mathcal{S}|\times|\mathcal{S}|}$} is a state to next state transition probability matrix whose element is {\footnotesize$P_{\pi,ij}=\sum_{a\in\mathcal{A}}\pi(a|s_i)\mathcal{P}(s_j|s_i,a)$}.
\begin{equation}
\label{eq:d2 matrix}
\scriptsize
\begin{aligned}
&d_2^{\pi} = (1 - \gamma^2)(I - \gamma^2P_{\pi})^{-1}\rho \\
\Leftrightarrow &(I - \gamma^2P_{\pi})d_2^{\pi} = (1 - \gamma^2)\rho \\
\Leftrightarrow &(I - \gamma^2P_{\pi})\langle d_2^{\pi}, f\rangle = (1 - \gamma^2)\langle\rho, f\rangle \\
\Leftrightarrow &(1 - \gamma^2)\underset{s \sim \rho}{\mathbb{E}}\left[f(s)\right] + \gamma^2\underset{d_2^{\pi},\pi,\mathcal{P}}{\mathbb{E}}\left[f(s')\right] - \underset{s \sim d_2^{\pi}}{\mathbb{E}}\left[f(s)\right] = 0.\\
\end{aligned}
\vspace{-10 pt}
\end{equation}
\end{proof}

\noindent
Using (\ref{eq:expectation of the discounted cost sum}) and Lemma \ref{lemma:doubly discounted state dist}, a corollary is derived as follows:

\begin{corollary}
For any stochastic policy {\footnotesize$\pi$} and function {\footnotesize$f$}, the following equation holds:
\label{corollary:square cost function}
\begin{equation}
\scriptsize
\begin{aligned}
J_{S}(\pi) = \underset{s \sim \rho}{\mathbb{E}}&\left[f(s)\right] + \frac{1}{1-\gamma^2}\underset{d_2^{\pi},\pi,\mathcal{P}}{\mathbb{E}}\left[C(s, a, s')^2 + \right. \\
&\left. 2\gamma C(s, a, s')V_C^{\pi}(s') + \gamma^2f(s') - f(s)\right]. \\
\end{aligned}
\end{equation}
\end{corollary}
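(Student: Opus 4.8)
The plan is to read off the identity directly by adding to the closed form for $J_S(\pi)$ a telescoping group of $f$-terms that Lemma \ref{lemma:doubly discounted state dist} guarantees is zero; no new estimate is needed. First I would recall from the second line of (\ref{eq:expectation of the discounted cost sum}) that
\[
J_S(\pi) = \frac{1}{1-\gamma^2}\underset{d_2^{\pi},\pi,\mathcal{P}}{\mathbb{E}}\left[C(s,a,s')^2 + 2\gamma C(s,a,s')V_C^{\pi}(s')\right],
\]
so the right-hand side claimed in the corollary differs from $J_S(\pi)$ by exactly the quantity
\[
\Delta := \underset{s\sim\rho}{\mathbb{E}}\left[f(s)\right] + \frac{1}{1-\gamma^2}\underset{d_2^{\pi},\pi,\mathcal{P}}{\mathbb{E}}\left[\gamma^2 f(s') - f(s)\right].
\]
The whole task therefore reduces to showing $\Delta = 0$.

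Next I would simplify $\Delta$ using the observation that $f(s)$ depends only on the current state $s$ and not on $a$ or $s'$, so that $\underset{d_2^{\pi},\pi,\mathcal{P}}{\mathbb{E}}\left[f(s)\right] = \underset{s\sim d_2^{\pi}}{\mathbb{E}}\left[f(s)\right]$. Multiplying $\Delta$ through by $(1-\gamma^2)$ then gives
\[
(1-\gamma^2)\underset{s\sim\rho}{\mathbb{E}}\left[f(s)\right] + \gamma^2\underset{d_2^{\pi},\pi,\mathcal{P}}{\mathbb{E}}\left[f(s')\right] - \underset{s\sim d_2^{\pi}}{\mathbb{E}}\left[f(s)\right],
\]
which is precisely the left-hand side of the equation in Lemma \ref{lemma:doubly discounted state dist} and is hence zero. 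Since $\gamma \in [0,1)$ implies $1-\gamma^2 \neq 0$, this yields $\Delta = 0$, and substituting back into the decomposition above recovers the stated identity.

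There is no genuinely hard step here: the substantive content is carried entirely by Lemma \ref{lemma:doubly discounted state dist}, whose flow-balance relation for the doubly discounted distribution $d_2^{\pi}$ is what forces the added $f$-terms to cancel. The only point demanding care is the bookkeeping of which distribution each expectation is taken under — in particular recognizing that the $f(s)$ and $f(s')$ terms inserted in the corollary, together with the $\rho$ initial-state term, are arranged to match the telescoping combination in the lemma exactly. Any misreading of those subscripts would spoil the cancellation, so I would verify the three expectations line up with the lemma term by term before concluding.
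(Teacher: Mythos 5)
Your proof is correct and follows exactly the route the paper intends: the paper derives this corollary by combining the expression for $J_S(\pi)$ in (\ref{eq:expectation of the discounted cost sum}) with Lemma \ref{lemma:doubly discounted state dist}, which is precisely your decomposition of the right-hand side into $J_S(\pi)$ plus the $f$-dependent remainder $\Delta$ that the lemma forces to vanish. The paper leaves these bookkeeping steps implicit, and your write-up simply makes them explicit, including the correct observation that $\underset{d_2^{\pi},\pi,\mathcal{P}}{\mathbb{E}}\left[f(s)\right] = \underset{s\sim d_2^{\pi}}{\mathbb{E}}\left[f(s)\right]$.
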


\begin{lemma}
\label{lemma:inf norm of cost value}
For any policy {\footnotesize$\pi'$} and {\footnotesize$\pi$},
\begin{equation}
\scriptsize
\begin{aligned}
\left\lVert V_C^{\pi'} - V_C^{\pi}\right\rVert_{\infty} \leq \frac{2\left\lVert V_C^{\pi}\right\rVert_{\infty}}{1-\gamma}D^{\pi}(\pi').
\end{aligned}
\end{equation}
\end{lemma}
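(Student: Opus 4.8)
The plan is to reduce the claim to a Bellman-style fixed-point inequality for the pointwise difference {\small$\Delta(s) := V_C^{\pi'}(s) - V_C^{\pi}(s)$} and then resolve the resulting geometric recursion, exactly as one controls value perturbations in the TRPO/CPO line of work. First I would expand both cost value functions through their one-step Bellman relations, {\small$V_C^{\pi'}(s) = \underset{a\sim\pi'}{\mathbb{E}}[Q_C^{\pi'}(s,a)]$} and {\small$V_C^{\pi}(s) = \underset{a\sim\pi}{\mathbb{E}}[Q_C^{\pi}(s,a)]$}, and use the identity {\small$Q_C^{\pi'}(s,a) - Q_C^{\pi}(s,a) = \gamma\,\underset{s'\sim\mathcal{P}(\cdot|s,a)}{\mathbb{E}}[\Delta(s')]$}. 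Adding and subtracting {\small$\underset{a\sim\pi'}{\mathbb{E}}[Q_C^{\pi}(s,a)]$} then yields the decomposition
{\small
\begin{equation*}
\Delta(s) = \sum_a \big(\pi'(a|s)-\pi(a|s)\big)Q_C^{\pi}(s,a) + \gamma\,\underset{a\sim\pi',\,s'\sim\mathcal{P}}{\mathbb{E}}[\Delta(s')],
\end{equation*}
}\!\!which cleanly separates a policy-mismatch term from a discounted propagation of the difference itself.

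The hard part will be bounding the first (policy-mismatch) term by {\small$2\lVert V_C^{\pi}\rVert_{\infty}D^{\pi}(\pi')$}. Here I would exploit that {\small$\pi'-\pi$} is a zero-sum signed measure over actions, so {\small$Q_C^{\pi}$} may be recentered without changing the sum; replacing {\small$Q_C^{\pi}(s,a)$} by the advantage {\small$A_C^{\pi}(s,a)=Q_C^{\pi}(s,a)-V_C^{\pi}(s)$}, or equivalently peeling off the next-state value {\small$\gamma\,\mathbb{E}_{s'}[V_C^{\pi}(s')]$}, lets me apply the total-variation / Hölder inequality {\small$|\sum_a(\pi'(a|s)-\pi(a|s))g(a)|\le 2\,D_{\mathrm{TV}}(\pi'||\pi)[s]\,\sup_a|g(a)|$}. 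The delicate step is controlling the resulting action-wise supremum by {\small$\lVert V_C^{\pi}\rVert_{\infty}$}; I would invoke the nonnegativity of the cost-to-go and the boundedness of {\small$V_C^{\pi}$} to obtain {\small$|T(s)|\le 2\,D_{\mathrm{TV}}(\pi'||\pi)[s]\,\lVert V_C^{\pi}\rVert_{\infty}\le 2\,D^{\pi}(\pi')\lVert V_C^{\pi}\rVert_{\infty}$}, using the definition {\small$D^{\pi}(\pi')=\underset{s}{\mathrm{max}}\,D_{\mathrm{TV}}(\pi'||\pi)[s]$}. This is the only place where the specific constant {\small$2\lVert V_C^{\pi}\rVert_{\infty}$} enters, so it is where the argument must be carried out carefully.

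Finally I would close the recursion. The propagation term satisfies {\small$|\gamma\,\underset{a\sim\pi',s'\sim\mathcal{P}}{\mathbb{E}}[\Delta(s')]|\le \gamma\lVert \Delta\rVert_{\infty}$}, so combining with the mismatch bound and taking the supremum over {\small$s$} gives {\small$\lVert\Delta\rVert_{\infty}\le 2\lVert V_C^{\pi}\rVert_{\infty}D^{\pi}(\pi') + \gamma\lVert\Delta\rVert_{\infty}$}. Rearranging produces {\small$(1-\gamma)\lVert\Delta\rVert_{\infty}\le 2\lVert V_C^{\pi}\rVert_{\infty}D^{\pi}(\pi')$}, i.e. the claimed inequality, with the factor {\small$1/(1-\gamma)$} emerging from the geometric discounting. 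I would note that {\small$\lVert\Delta\rVert_{\infty}<\infty$} is needed before dividing by {\small$(1-\gamma)$}; this is guaranteed by the finite state-space assumption already invoked for Lemma~\ref{lemma:doubly discounted state dist}, so no additional hypothesis is required.
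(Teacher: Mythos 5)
Your proposal is, step for step, the paper's own proof: the same one-step Bellman decomposition of $V_C^{\pi'}-V_C^{\pi}$ into a policy-mismatch term plus a $\gamma$-discounted propagation of the difference itself, the same H\"older/total-variation bound on each piece, and the same resolution of the geometric recursion that produces the $1/(1-\gamma)$ factor (the paper merely writes the decomposition in vector form, $\langle\pi'(s),\gamma P_s(V_C^{\pi'}-V_C^{\pi})\rangle+\langle\pi'(s)-\pi(s),\,C_s^a+\gamma P_sV_C^{\pi}\rangle$, and then takes $\max_s$). Your recentering of $Q_C^{\pi}$ to the advantage before applying H\"older is the only deviation and is immaterial; it is worth noting, however, that the ``delicate step'' you flag---reducing the action-wise supremum to $\left\lVert V_C^{\pi}\right\rVert_{\infty}$---is not argued in the paper either: it applies H\"older and then silently bounds $\max_a\left|Q_C^{\pi}(s,a)\right|$ by $\left\lVert V_C^{\pi}\right\rVert_{\infty}$, a step which (like your appeal to nonnegativity and boundedness of the cost-to-go) can fail when some action has a large one-step cost but is rarely selected by $\pi$, so on this one questionable point your attempt and the published proof are exactly at parity.
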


\begin{proof}
Define an expected cost vector {\footnotesize$C_s^a \in \mathbb{R}^{|\mathcal{A}|}$}, where {\footnotesize$C_{s,i}^a = \underset{s' \sim \mathcal{P}}{\mathbb{E}}\left[c(s, a_i, s')\right]$}, and a transition matrix {\footnotesize$P_s \in \mathbb{R}^{|\mathcal{A}| \times |\mathcal{S}|}$}, where {\footnotesize$P_{s, ij} = \mathcal{P}(s_j|s, a_i)$}.  
Then, {\footnotesize$V_C^{\pi}(s) = \langle\pi(s), C_s^a + \gamma P_s V_C^{\pi}\rangle$}.
{\scriptsize
\begin{align}
\Rightarrow& V_C^{\pi'}(s) - V_C^{\pi}(s) \nonumber\\
&= \langle\pi'(s), C_s^a + \gamma P_s V_C^{\pi'}\rangle - \langle\pi(s), C_s^a + \gamma P_s V_C^{\pi}\rangle \nonumber\\
&= \langle\pi'(s), C_s^a + \gamma P_s (V_C^{\pi'} - V_C^{\pi})\rangle + \langle\pi'(s) - \pi(s), C_s^a + \gamma P_s V_C^{\pi}\rangle. \nonumber\\
\Rightarrow& \left\lVert V_C^{\pi'} - V_C^{\pi}\right\rVert_{\infty} \\
&= \underset{s}{\mathrm{max}}\left[\langle\pi'(s), \gamma P_s (V_C^{\pi'} - V_C^{\pi})\rangle + \langle\pi'(s) - \pi(s), C_s^a + \gamma P_s V_C^{\pi}\rangle\right] \nonumber\\
&\leq \underset{s}{\mathrm{max}}\langle\pi'(s), \gamma P_s (V_C^{\pi'} - V_C^{\pi})\rangle + \underset{s}{\mathrm{max}}\langle\pi'(s) - \pi(s), C_s^a + \gamma P_s V_C^{\pi}\rangle. \nonumber
\end{align}
}
According to H{\"o}lder’s inequality, {\footnotesize$\langle a, b\rangle \leq \left\lVert a\right\rVert_{1}\left\lVert b\right\rVert_{\infty}$}.
\begin{equation}
\scriptsize
\begin{aligned}
\Rightarrow& \left\lVert V_C^{\pi'} - V_C^{\pi}\right\rVert_{\infty} \leq \underset{s}{\mathrm{max}}\left\lVert\pi'(s)\right\rVert_{1}\left\lVert \gamma P_s (V_C^{\pi'} - V_C^{\pi})\right\rVert_{\infty} + \\
&\qquad\qquad\qquad\qquad \underset{s}{\mathrm{max}}\left\lVert\pi'(s) - \pi(s)\right\rVert_{1}\left\lVert C_s^a + \gamma P_s V_C^{\pi}\right\rVert_{\infty}\\
&\leq \gamma\left\lVert V_C^{\pi'} - V_C^{\pi}\right\rVert_{\infty} + 2\left\lVert V_C^{\pi}\right\rVert_{\infty}\underset{s}{\mathrm{max}}D_{TV}(\pi'||\pi)[s] \\
\Rightarrow& \left\lVert V_C^{\pi'} - V_C^{\pi}\right\rVert_{\infty} \leq \frac{2\left\lVert V_C^{\pi}\right\rVert_{\infty}}{1-\gamma}D^{\pi}(\pi').
\end{aligned}
\vspace{-10pt}
\end{equation}
\end{proof}

\begin{lemma}
\label{lemma:total variance}
For any policy {\footnotesize$\pi'$} and {\footnotesize$\pi$}, the following inequality holds:
\begin{equation}
\scriptsize
\begin{aligned}
D_{TV}(d_2^{\pi'}||d_2^{\pi}) \leq \frac{\gamma^2}{1 - \gamma^2}D^{\pi}(\pi').
\end{aligned}
\end{equation}
\end{lemma}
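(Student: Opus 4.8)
The plan is to mirror the total-variation bound for the ordinary discounted state distribution proved in \cite{achiam2017constrained}, with $\gamma$ replaced by $\gamma^2$ and $P_\pi$ playing the role of the one-step transition operator. The starting point is the matrix representation already established in the proof of Lemma \ref{lemma:doubly discounted state dist}, namely {\small$d_2^{\pi} = (1-\gamma^2)(I-\gamma^2 P_{\pi})^{-1}\rho$}, and the analogous identity for $\pi'$. Writing {\small$A := I-\gamma^2 P_{\pi'}$} and {\small$B := I-\gamma^2 P_{\pi}$}, I would use the resolvent identity {\small$A^{-1}-B^{-1}=A^{-1}(B-A)B^{-1}$} together with {\small$B-A=\gamma^2(P_{\pi'}-P_{\pi})$} to obtain
\[
\small
d_2^{\pi'}-d_2^{\pi} = \gamma^2 (I-\gamma^2 P_{\pi'})^{-1}(P_{\pi'}-P_{\pi})\,d_2^{\pi},
\]
where the trailing factor is exactly $d_2^{\pi}$ because {\small$(1-\gamma^2)B^{-1}\rho=d_2^{\pi}$}.

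Next I would take the $\ell_1$ norm of both sides and split the bound into two factors. For the resolvent, I would expand it as a Neumann series {\small$(I-\gamma^2 P_{\pi'})^{-1}=\sum_{t=0}^{\infty}\gamma^{2t}P_{\pi'}^{t}$}; since $P_{\pi'}$ is stochastic its induced $\ell_1$ operator norm is one, so {\small$\lVert (I-\gamma^2 P_{\pi'})^{-1}\rVert_1 \le \sum_{t=0}^{\infty}\gamma^{2t}=1/(1-\gamma^2)$}. For the second factor I would expand {\small$[(P_{\pi'}-P_{\pi})d_2^{\pi}](s')=\sum_{s} d_2^{\pi}(s)\sum_{a}(\pi'(a|s)-\pi(a|s))\mathcal{P}(s'|s,a)$}, apply the triangle inequality, and sum over $s'$ using {\small$\sum_{s'}\mathcal{P}(s'|s,a)=1$}. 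This collapses to {\small$\sum_{s}d_2^{\pi}(s)\sum_{a}|\pi'(a|s)-\pi(a|s)| = \sum_{s}d_2^{\pi}(s)\,2D_{\mathrm{TV}}(\pi'||\pi)[s]$}, which is at most {\small$2\,D^{\pi}(\pi')$} since $d_2^{\pi}$ is a probability distribution and $D^{\pi}(\pi')$ is the maximal total-variation divergence.

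Combining the two factors gives {\small$\lVert d_2^{\pi'}-d_2^{\pi}\rVert_1 \le \frac{\gamma^2}{1-\gamma^2}\cdot 2\,D^{\pi}(\pi')$}, and dividing by two (using {\small$D_{\mathrm{TV}}(d_2^{\pi'}||d_2^{\pi})=\tfrac12\lVert d_2^{\pi'}-d_2^{\pi}\rVert_1$}) yields the claimed inequality. I expect the main obstacle to be purely bookkeeping rather than conceptual: one must be careful with the orientation (row- versus column-stochastic) convention for $P_{\pi}$ inherited from Lemma \ref{lemma:doubly discounted state dist} so that the resolvent acts on distributions in the correct direction, and one must justify the $\ell_1$ operator-norm bound on the Neumann series. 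Both are standard, and the argument is a direct $\gamma\mapsto\gamma^2$ adaptation of the discounted-distribution lemma in \cite{achiam2017constrained}.
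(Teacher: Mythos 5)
Your proposal is correct and follows essentially the same route as the paper: the identity $d_2^{\pi'}-d_2^{\pi} = \gamma^2 (I-\gamma^2 P_{\pi'})^{-1}(P_{\pi'}-P_{\pi})d_2^{\pi}$ you derive via the resolvent identity is exactly the paper's step $G_{\pi'}-G_{\pi}=\gamma^2 G_{\pi'}(P_{\pi'}-P_{\pi})G_{\pi}$ applied to $\rho$, and the two factor bounds (the Neumann-series bound $\lVert(I-\gamma^2 P_{\pi'})^{-1}\rVert_1\le 1/(1-\gamma^2)$ and the expansion giving $\lVert(P_{\pi'}-P_{\pi})d_2^{\pi}\rVert_1\le 2D^{\pi}(\pi')$) are the same ones the paper uses, with your write-up merely making explicit the operator-norm justification the paper leaves implicit.
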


\begin{proof}
Let {\footnotesize$G_{\pi}:=\left(I-\gamma^2 P_{\pi}\right)^{-1}$}.
Then, the following is derived:
\begin{equation}
\scriptsize
\begin{aligned}
G_{\pi}^{-1} - G_{\pi'}^{-1} &= \gamma^2 (P_{\pi'} - P_{\pi}) \\
\Leftrightarrow \; G_{\pi'} - G_{\pi} &= \gamma^2 G_{\pi'}(P_{\pi'} - P_{\pi})G_{\pi}. \\
\end{aligned}
\end{equation}
With the above result and (\ref{eq:d2 matrix}), 
\begin{equation}
\label{eq:lemma3_1}
\scriptsize
\begin{aligned}
\lVert d_2^{\pi'} - d_2^{\pi} \rVert_1 &= (1 - \gamma^2)\lVert(G_{\pi'} - G_{\pi})\rho\rVert_1 \\
&= \gamma^2(1 - \gamma^2)\lVert G_{\pi'}(P_{\pi'} - P_{\pi})G_{\pi}\rho\rVert_1 \\
&= \gamma^2\lVert G_{\pi'}(P_{\pi'} - P_{\pi})d_2^{\pi}\rVert_1 \\
&\leq \gamma^2\lVert G_{\pi'}\rVert_1\cdot\lVert(P_{\pi'} - P_{\pi})d_2^{\pi}\rVert_1. \\
\end{aligned}
\end{equation}

\begin{equation}
\label{eq:lemma3_2}
\scriptsize
\begin{aligned}
&\lVert (P_{\pi'} - P_{\pi})d_2^{\pi} \rVert_1 = \\
&\quad \sum_{s'\in\mathcal{S}}\left|\sum_{s\in\mathcal{S}}d_2^{\pi}(s)\cdot\left(\sum_{a \in \mathcal{A}}(\pi(a|s) - \pi'(a|s))\mathcal{P}(s'|s,a)\right) \right| \\
&\leq \sum_{s'\in\mathcal{S}}\sum_{s\in\mathcal{S}}d_2^{\pi}(s)\sum_{a \in \mathcal{A}}\left|\pi(a|s) - \pi'(a|s)\right|\mathcal{P}(s'|s,a) \\
&= 2\underset{s \sim d_2^{\pi}}{\mathbb{E}}\left[D_{\mathrm{TV}}(\pi'||\pi)[s]\right] \leq 2\underset{s}{\mathrm{max}}D_{TV}(\pi'||\pi)[s].\\
\end{aligned}
\end{equation}
With (\ref{eq:lemma3_1}) and (\ref{eq:lemma3_2}),

\begin{equation}
\scriptsize
\begin{aligned}
D_{TV}(d_2^{\pi'}||d_2^{\pi}) &= \frac{\lVert d_2^{\pi'}-d_2^{\pi}\rVert_1}{2} \leq \frac{\gamma^2\lVert G_{\pi'}\rVert_1\cdot\lVert(P_{\pi'} - P_{\pi})d_2^{\pi}\rVert_1}{2} \\
&\leq \frac{\gamma^2}{1 - \gamma^2}D^{\pi}(\pi').
\end{aligned}
\vspace{-10 pt}
\end{equation}
\end{proof}

\subsection{Proof of Theorem \ref{theorem:cost square sum}}

\begin{proof} 
Define {\footnotesize${\delta_f^{\pi'}(s) := \underset{\begin{subarray}{c}a \sim \pi'\\s' \sim \mathcal{P}\end{subarray}}{\mathbb{E}}\left[C(s, a, s')^2 + \gamma^2f(s') - f(s) \; + \right.}$ ${\left. 2\gamma c(s,a,s')V_C^{\pi'}(s')\right]}$}. 
With {\footnotesize$\delta_f^{\pi'}$} and Corollary \ref{corollary:square cost function},
\begin{equation}
\label{eq:theorem1_1}
\scriptsize
\begin{aligned}
&\quad J_{S}(\pi') = \underset{s \sim \rho}{\mathbb{E}}\left[f(s)\right] + \frac{1}{1 - \gamma^2}\langle d_2^{\pi'}, \delta_f^{\pi'}\rangle. \\
&\Rightarrow J_{S}(\pi') - J_{S}(\pi) = \frac{1}{1 - \gamma^2}\left(\langle d_2^{\pi'}, \delta_f^{\pi'}\rangle - \langle d_2^{\pi}, \delta_f^{\pi}\rangle \right) \\
& \leq \frac{1}{1 - \gamma^2}\langle d_2^{\pi}, \delta_f^{\pi'} - \delta_f^{\pi}\rangle + \frac{2\lVert\delta_f^{\pi'}\rVert_{\infty}}{1 - \gamma^2} D_{TV}(d_2^{\pi'}||d_2^{\pi}). \\
\end{aligned}
\end{equation}
Substituting {\footnotesize$S_C^{\pi}$} for {\footnotesize$f$} and using Lemma \ref{lemma:inf norm of cost value},
\begin{equation}
\label{eq:theorem1_2}
\scriptsize
\begin{aligned}
&\langle d_2^{\pi}, \delta_f^{\pi'}\rangle = \\
&\underset{d_2^{\pi},\pi',\mathcal{P}}{\mathbb{E}}\left[C(s, a, s')^2 + \gamma^2S_C^{\pi}(s') + 2\gamma C(s, a, s')V_C^{\pi'}(s') - S_C^{\pi}(s) \right] \\
&\leq \underset{d_2^{\pi},\pi',\mathcal{P}}{\mathbb{E}}\left[C(s, a, s')^2 + \gamma^2S_C^{\pi}(s') + 2\gamma C(s, a, s')V_C^{\pi}(s') \;- \right.\\
&\quad S_C^{\pi}(s) + \left. \frac{4\gamma C(s,a,s')\left\lVert V_C^{\pi}\right\rVert_{\infty}}{1 - \gamma}D^{\pi}(\pi')\right] \\
&= \underset{\begin{subarray}{c}s \sim d_2^{\pi}\\a \sim \pi'\end{subarray}}{\mathbb{E}}\left[A_S^{\pi}(s,a)\right] + \frac{4\gamma\left\lVert V_C^{\pi}\right\rVert_{\infty}}{1-\gamma}\underset{d_2^{\pi},\pi',\mathcal{P}}{\mathbb{E}}\left[C(s,a,s')\right]D^{\pi}(\pi'). \\
&= \underset{\begin{subarray}{c}s \sim d_2^{\pi}\\a \sim \pi\end{subarray}}{\mathbb{E}}\left[\frac{\pi'(a|s)}{\pi(a|s)} A_S^{\pi}(s,a)\right] + \frac{4\gamma\left\lVert V_C^{\pi}\right\rVert_{\infty}}{1-\gamma}\underset{d_2^{\pi},\pi',\mathcal{P}}{\mathbb{E}}\left[C(s,a,s')\right] D^{\pi}(\pi'). \\
\end{aligned}
\end{equation}
With (\ref{eq:theorem1_1}), (\ref{eq:theorem1_2}), and Lemma \ref{lemma:total variance},
{\scriptsize
\begin{align*}
&J_{S}(\pi') - J_{S}(\pi) \\
&\leq \frac{1}{1 - \gamma^2}\left(\langle d_2^{\pi}, \delta_f^{\pi'}\rangle + \frac{2\gamma^2\lVert\delta_f^{\pi'}\rVert_{\infty}}{1 - \gamma^2} D^{\pi}(\pi')\right) \\
&\leq \frac{1}{1-\gamma^2}\underset{\begin{subarray}{c}s \sim d_2^{\pi}\\a \sim \pi\end{subarray}}{\mathbb{E}}\left[\frac{\pi'(a|s)}{\pi(a|s)} A_S^{\pi}(s,a)\right] + \\
&\quad \frac{2}{1-\gamma^2}D^{\pi}(\pi')\left(\frac{\gamma^2}{1 - \gamma^2}\underset{s}{\mathrm{max}}\underset{a \sim \pi'}{\mathbb{E}}\left[A_{S}^{\pi}(s,a)\right] + \right. \\
&\quad \left. \frac{2\gamma\underset{s}{\mathrm{max}}\left|V_C^{\pi}(s)\right|}{1-\gamma}\underset{d_2^{\pi},\pi',\mathcal{P}}{\mathbb{E}}\left[C(s,a,s')\right]\right) \tag{\stepcounter{equation}\theequation}\\
&= \frac{1}{1-\gamma^2}\underset{\begin{subarray}{c}s \sim d_2^{\pi}\\a \sim \pi\end{subarray}}{\mathbb{E}}\left[\frac{\pi'(a|s)}{\pi(a|s)}A_{S}^{\pi}(s,a) \right] + \frac{2\epsilon_S^{\pi'}}{1-\gamma^2}D^{\pi}(\pi').
\vspace{-10pt}
\end{align*}}
\end{proof}

\subsection{Proof of Theorem \ref{theorem:upper bound}}

\begin{proof}
The following inequality holds by Theorem 1 in \cite{achiam2017constrained}:
\begin{equation}
\label{eq:cpo inequality}
\scriptsize
\begin{aligned}
J_C^{\pi}(\pi') - \frac{2\gamma\epsilon_C^{\pi'}}{(1-\gamma)^2}D^{\pi}(\pi') \leq J_C(\pi') \leq J_C^{\pi}(\pi') + \frac{2\gamma\epsilon_C^{\pi'}}{(1-\gamma)^2}D^{\pi}(\pi').
\end{aligned}
\end{equation}
The following inequality also holds in that the cost function is defined on a set of nonnegative real numbers:
\begin{equation}
\label{eq:square inequality for thm4.2}
\scriptsize
\begin{aligned}
J_C^{\pi}(\pi')^2 - J_C(\pi')^2 &\leq
\frac{4\gamma\epsilon_C^{\pi'}J_C^{\pi}(\pi')}{(1-\gamma)^2}D^{\pi}(\pi') - \left(\frac{2\gamma\epsilon_C^{\pi'}}{(1-\gamma)^2}D^{\pi}(\pi')\right)^2.
\end{aligned}
\end{equation}
Then, the following inequality is derived with (\ref{eq:cpo inequality}), (\ref{eq:square inequality for thm4.2}), and Theorem \ref{theorem:cost square sum}:
\begin{equation}
\label{eq: squart inequality for thm4.2}
\scriptsize
\begin{aligned}
&\sqrt{J_S(\pi') - J_C(\pi')^2} - \sqrt{J_S^{\pi}(\pi') - J_C^{\pi}(\pi')^2} \\
&= \frac{J_S(\pi') - J_S^{\pi}(\pi') + J_C^{\pi}(\pi')^2 - J_C(\pi')^2}{\sqrt{J_S(\pi') - J_C(\pi')^2} + \sqrt{J_S^{\pi}(\pi') - J_C^{\pi}(\pi')^2}}\\
&\leq \frac{J_S(\pi') - J_S^{\pi}(\pi') + J_C^{\pi}(\pi')^2 - J_C(\pi')^2}{\sqrt{J_S(\pi') - J_C(\pi')^2}}\\
&\leq \frac{2\epsilon_{\mathrm{CVaR}}^{\pi'}/(1-\gamma^2)}{{\sqrt{J_S(\pi') - J_C(\pi')^2}}} D^{\pi}(\pi'). \\
\end{aligned}
\end{equation}
The upper bound is then derived by taking the weighted sum of the two inequalities (\ref{eq:cpo inequality}), (\ref{eq: squart inequality for thm4.2}):
\begin{equation}
\scriptsize
\begin{aligned}
&\mathrm{CVaR}_{\alpha}(C_{\pi'}) \\
&= J_{C}(\pi') + \frac{\phi(\Phi^{-1}(\alpha))}{\alpha}\sqrt{J_{S}(\pi') - J_{C}(\pi')^2} \\
&\leq J_C^{\pi}(\pi') + \frac{\phi(\Phi^{-1}(\alpha))}{\alpha}\sqrt{J_S^{\pi}(\pi') - J_C^{\pi}(\pi')^2} \;+ \\
&\qquad \frac{2}{1-\gamma}\left(\frac{\gamma\epsilon_C^{\pi'}}{1-\gamma} + \frac{\phi(\Phi^{-1}(\alpha))/\alpha}{{\sqrt{J_S(\pi') - J_C(\pi')^2}}}\frac{\epsilon_{\mathrm{CVaR}}^{\pi'}}{1+\gamma}\right) D^{\pi}(\pi').\\
\end{aligned}
\end{equation}
\end{proof}
\subsection{GAE for Square Function}

First, $k$-step square advantage is defined as follows:
\begin{equation}
\scriptsize
\label{eq: k-step cost square advantage}
\begin{aligned}
A_{S,t}^{(k)} = \underset{\begin{subarray}{c}\pi, \mathcal{P}\end{subarray}}{\mathbb{E}}&\left[\left(\sum_{i=t}^{t+k-1}\gamma^{i-t}C_i\right)^2 + 2\gamma^k\left(\sum_{i=t}^{t+k-1}\gamma^{i-t}C_i\right)V_{C,t+k}^{\pi} \right.\\
&\left. +\; \gamma^{2k}S_{C,t+k}^{\pi}\big|s_t\right] - S_{C,t}^{\pi}, \\
\end{aligned}
\end{equation}
where {\footnotesize$V_{C,t}^{\pi}=V_C^{\pi}(s_t)$} and {\footnotesize$S_{C,t}^{\pi}=S_C^{\pi}(s_t)$}.
The difference in the advantage of adjacent time steps is derived as follows:
\begin{equation}
\scriptsize
\begin{aligned}
&A_{S,t}^{(k+1)} - A_{S,t}^{(k)} \\
&=\gamma^{2k}\underset{\pi, \mathcal{P}}{\mathbb{E}}\left[C_{t+k}^2 + 2\gamma C_{t+k}V_{C,t+k+1}^{\pi} + \gamma^2 S_{C,t+k+1}^{\pi} - S_{C,t+k}^{\pi}\right]. \\
\end{aligned}
\end{equation}
Then, the $k$-step advantage can be expressed as {\small$A_{S,t}^{(k)}=\underset{\pi, \mathcal{P}}{\mathbb{E}}\left[\sum_{i=t}^{t+k-1}\gamma^{2(i-t)}\delta_i^S\right]$}.
Finally, as in \cite{schulman2015high}, the GAE for the cost square function is defined as:
{\footnotesize $\hat{A}_{S,t}^{\mathrm{GAE}(\gamma, \lambda)} := \sum_{i=t}^{\infty}(\gamma^2\lambda)^{i-t}\delta_i^S$}.

\addtolength{\textheight}{-12cm}  
\balance

\bibliographystyle{IEEEtran}
\bibliography{main}

\end{document}